\newtheorem{theorem}{Theorem}[section]
\newtheorem{proposition}[theorem]{Proposition}
\newtheorem{lemma}[theorem]{Lemma}
\newtheorem{corollary}[theorem]{Corollary}
\theoremstyle{definition}
\newtheorem{assumption}[theorem]{Assumption}
\theoremstyle{remark}
\newtcolorbox{setup}{
  colframe=cyan!30,
  colback=cyan!5,
  coltitle=black,
  fonttitle=\bfseries,
  title=Setup
}
\DeclareMathOperator*{\argmin}{arg\,min}
\newcommand*\circled[1]{\tikz[baseline=(char.base)]{
            \node[shape=circle,draw,inner sep=2pt] (char) {#1};}}
\def\BibTeX{{\rm B\kern-.05em{\sc i\kern-.025em b}\kern-.08em
    T\kern-.1667em\lower.7ex\hbox{E}\kern-.125emX}}
\begin{document}
\title{Graph Semi-Supervised Learning for Point Classification on Data Manifolds
}
\author{Caio F. Deberaldini Netto*, Zhiyang Wang\textsuperscript{\textdagger} and Luana Ruiz*
\thanks{*Department of Applied Mathematics and Statistics, Mathematical Institute for Data Science (MINDS), Data Science and Artificial Intelligence Institute (DSAI), Johns Hopkins University, Baltimore, USA. E-mail: \{cnetto1, lrubini1\}@jh.edu
\textsuperscript{\textdagger}Halıcıoğlu Data Science Institute, University of California San Diego, California, USA. E-mail: zhw135@ucsd.edu
}}

\markboth{IEEE Transactions on Signal Processing (Submitted)}%
{How to Use the IEEEtran \LaTeX \ Templates}

\maketitle

\begin{abstract}
We propose a graph semi-supervised learning framework for classification tasks on data manifolds. Motivated by the manifold hypothesis, we model data as points sampled from a low-dimensional manifold $\mathcal{M} \subset \mathbb{R}^F$. The manifold is approximated in an unsupervised manner using a variational autoencoder (VAE), where the trained encoder maps data to embeddings that represent their coordinates in $\mathbb{R}^F$. A geometric graph is constructed with Gaussian-weighted edges inversely proportional to distances in the embedding space, transforming the point classification problem into a semi-supervised node classification task on the graph. This task is solved using a graph neural network (GNN).
Our main contribution is a theoretical analysis of the statistical generalization properties of this data-to-manifold-to-graph pipeline. We show that, under uniform sampling from $\mathcal{M}$, the generalization gap of the semi-supervised task diminishes with increasing graph size, up to the GNN training error. Leveraging a training procedure that resamples a slightly larger graph at regular intervals during training, we then show that the generalization gap can be reduced even further, vanishing asymptotically. Finally, we validate our findings with numerical experiments on image classification benchmarks, demonstrating the empirical effectiveness of our approach.
\end{abstract}

\begin{IEEEkeywords}
Graph neural networks, manifold neural networks, statistical generalization, classification problems, image classification
\end{IEEEkeywords}

\section{Introduction}

Graph neural networks (GNNs) have demonstrated excellent performance on graph-structured data, with successful application examples ranging from molecular biology and chemistry \cite{gilmer2017neural}, through network science \cite{gao2023recsys_graph_survey}, to natural language processing 
\cite{santoro2017simple}. 
Though results are remarkable across the board, perhaps the most impactful applications of GNNs are in the semi-supervised setting. This is evidenced by the three most cited and benchmarked graph datasets in Papers with Code, which are all semi-supervised \cite{paperswithcode_graph_datasets}. In graph semi-supervised learning, there is only one graph, and we assume we know information about every node in the form of node features. We further assume nodes belong to different classes, but most nodes' classes are unknown. The goal of semi-supervised learning is to use supervision over this partial set of labeled nodes to predict the classes of unlabeled nodes.

Several GNN properties justify their superior performance in semi-supervised learning. For instance, permutation equivariance ensures that patterns learned on certain graph substructures are automatically replicated on regions where these substructures are repeated \cite{ruiz2020gnns,kipf17-classifgcnn,maron2018invariant,battaglia2018relational}, while stability ensures these patterns are replicated even on slightly deformed but still similar regions \cite{gama19-stability,gama2019stabilityscat}.
A particularly important property is \textit{transferability}, which is the ability of a trained model to retain predictive performance across different graphs belonging to the same family, e.g., sampled from the same random graph model, or converging to the same graph limit \cite{ruiz20-transf,keriven2020convergence,levie2019transferability}. The transferability property supports the superior performance of GNNs in graph semi-supervised learning because limited supervision means the GNN is only optimized over a subgraph, which can be seen as a sample from the original graph.

GNN transferability was well-studied in the case of \textit{geometric graphs}, where nodes represent samples from a continuous non-Euclidean space, i.e., a manifold, and edges capture proximity relationships. 
It was also demonstrated empirically in graph families with \emph{explicit geometric structure}, with notable use cases including classification of point clouds and meshes with different resolutions \cite{bronstein17-geomdeep,wang2019dynamic} and GNN-based path planners able to operate in unseen regions of the same manifold \cite{cervino2023learning}.
At the same time, many other types of high-dimensional data exhibit \emph{intrinsic low-dimensional geometry}; this is the so-called \textit{manifold hypothesis} \cite{bengio2013representation}. Natural image distributions, for example, concentrate around a low-dimensional manifold embedded in high-dimensional space \cite{zhu2016generative}. 
The same holds for physical systems, where governing equations impose smooth, low-dimensional constraints on the data \cite{Sroczynski2018DataDriven}. 

Considering that high-dimensional datasets have intrinsic geometry---in the vein of graphs sampled from a manifold---, a pertinent question is:  

\begin{center}
\textit{Can we leverage the \textbf{intrinsic geometric structure of data} together with the \textbf{transferability property of GNNs} to improve predictive performance on high-dimensional data?} 
\end{center}

This is the premise of our work. Focusing on image classification specifically, instead of treating classification as a standard pointwise learning task we \textit{construct a geometric graph from the data} and use \textit{graph-based semi-supervised learning} to improve generalization and predictive performance. The idea is that if the graph accurately captures the manifold structure, then GNNs should be able to exploit this geometry, resulting in models that better capture the data distribution and are able to make better predictions.

\subsection{Contributions}

We verify this premise with the following contributions:

\

\textbf{Reformulating point classification as graph semi-supervised learning.} We approximate the data manifold using a geometric graph constructed from variational autoencoder embeddings. This transforms the original classification problem into a \textit{semi-supervised node classification task} on the graph.

\textbf{Generalization analysis of GNNs on geometric graphs.} We prove that, when the graph is sampled from a manifold, the semi-supervised \textit{generalization gap has an upper bound that decreases as the graph size increases}. However, this bound depends on the loss over the entire manifold, which is impractical since we only have access to finite samples.

\textbf{A more practical generalization bound.} We refine our bound to show that the generalization gap depends only on the \textit{semi-supervised training loss on the graph}, rather than the intractable loss on the full manifold. This makes the bound more relevant in practice, albeit looser.

\textbf{A training procedure to improve generalization.} We leverage an algorithm that \textit{increases the graph size over time during training}, ensuring that the learning process remains closer to the underlying manifold. We show that this approach leads to a more practical and smaller generalization gap.

\textbf{Empirical validation.} We validate our theoretical findings with experiments on image classification benchmarks, demonstrating that GNN-based learning on geometric graphs from image manifolds improves statistical generalization and predictive performance by leveraging data geometry.

This work provides both theoretical and empirical support for using GNN-based semi-supervised learning on geometric graphs as a principled approach for classification on data manifolds. By structuring learning around the \textit{geometry of the data}, we show that GNNs can generalize better, even with limited labeled samples.

\section{Related work}

\noindent \textbf{Graph neural networks.} GNNs are deep convolutional architectures tailored to graphs \cite{scarselli2008graph}. Each layer of a GNN consists of a graph convolution, an extension of the convolution to the graph domain, followed by a pointwise nonlinearity acting node-wise on the graph \cite{du2018graph,ruiz2020gnns,kipf17-classifgcnn,defferrard17-cnngraphs,bruna14-deepspectralnetworks}. The nice theoretical properties of GNNs---invariances, stability, locality---are inherited from graph convolutions \cite{ruiz19-inv,gama19-stability}, which, similarly to time or image convolutions, operate by means of a shift-and-sum operation where shifts are implemented as graph diffusions via local node-to-neighbor exchanges \cite{segarra17-linear,gama18-gnnarchit,du2018graph}. One can also encounter GNNs described in terms of local aggregation functions \cite{xu2018how,hamilton2017inductive,gilmer2017neural}, which may be seen as particular cases of GNNs that use graph convolutional filters of order one [c.f. \cite{ruiz2020gnns}].

\noindent \textbf{GNN transferability.} A popular graph limit model for studying the transferability property of GNNs are graphons \cite{magner2020power,avella2018centrality,eldridge2016graphons,borgs2017graphons}, which are measurable functions representing limits of sequences of dense graphs \cite{borgs2008convergent,lovasz2012large}. A series of works introduced graphon convolutions and proved asymptotic convergence of graph to graphon convolutions, which in turn implies convergence of GNNs to graphon NNs \cite{ruiz2019graphon,ruiz20-transf,ruiz2020graphonsp,ruiz2021graphon-filters,ruiz2021transferability,cervino2021increase}. The non-asymptotic statement of this result, appearing in \cite{ruiz2021transferability}, implies the so-called transferability property of GNNs: GNNs with fixed weights can be transferred across graphs converging to or sampled from the same graphon with bounded error. 

Yet, graphons have an important limitation, which is that unless additional assumptions are made on the node sample space these models do not encode geometry, i.e., there is no clear embedding of the graph nodes in some continuous space. This is partially addressed by \cite{bronstein17-geomdeep} and \cite{levie2019transferability}, which consider graphs sampled from generic topological spaces. 
However, in these works graphs and graph signals are obtained by application of a generic sampling operator whose error is assumed bounded without taking into account the specific topology. Unlike these approaches, in this paper we particularize the topology to an embedded submanifold of some ambient Euclidean space. 
Endowing this manifold with the uniform measure, we can sample graphs by sampling points from the manifold uniformly and connecting them with edges depending on these points' distances in ambient space.

\noindent \textbf{The manifold hypothesis.} The ``manifold hypothesis'' posits that high-dimensional data lies on manifolds of lower dimension \cite{bengio2013representation}. Mathematically, this hypothesis was formalized in works such as \cite{narayanan2010sample,fefferman2016testing}, which developed conditions for when data on or near a smooth manifold can be accurately modeled and learned via sample complexity analysis and new test statistics. In practice, the manifold hypothesis also inspired a range of ML methods, particularly in dimensionality reduction \cite{belkin2003laplacian,coifman2006diffusion,roweis2000nonlinear,coifman2005geometric,balasubramanian2002isomap,elhamifar2013sparse} and manifold-based regularization for semi-supervised and active learning \cite{belkin2006convergence,calder12520novel,niyogi2013manifold}. Unlike classical methods for dimensionality reduction, here we learn variational data embeddings which empirical evidence shows are smoother and more structured than their deterministic counterparts. Unlike learning methods using manifolds strictly for regularization, we leverage the manifold as the support of the data to exploit its geometric information using GNNs and study the impact of doing so on statistical generalization.
\section{Background} \label{sec:background}

\subsection{Manifold hypothesis and geometric graph approximation}\label{sec:geom_graph_approx}

Under the manifold hypothesis, high-dimensional data can be represented as points $u$ of a $d$-dimensional submanifold $\ccalM$ embedded in some Euclidean space $\reals^F$, i.e., $u \in \ccalM$ and $\ccalM \subset \reals^F$. Since $\ccalM$ is an embedded submanifold, $u$ can be expressed in ambient space coordinates via the map $\ccalX: \ccalM \to \reals^F$. 

In general, we do not know the manifold $\ccalM$, so in order to estimate it we first obtain embeddings $x \in \reals^F$ from the data --- through some dimensionality reduction (e.g., PCA) or learning (e.g., self-supervised) technique --- and assume $x=\ccalX(u)$. The manifold can then be approximated using a geometric graph \cite{netto2025impmnn}. Explicitly, let $x_i$ and $x_j$ denote the embeddings associated with samples $i$ and $j$. These samples are seen as nodes of an undirected graph $G$, and they are connected by an edge with weight
\begin{equation} \label{eqn:geom_graph}
w_{ij} =
\begin{cases}
    \exp{-\frac{\|x_i-x_j\|^2}{2\sigma^2}} \ \mbox{ if } i\neq j \\
     0 \quad \quad \quad \quad \quad \quad \mbox{ if } i=j \text{.}
\end{cases}
\end{equation}
Given $n$ samples, we write the graph adjacency matrix $A_n \in \reals^{n \times n}$ entry-wise as $[A_n]_{ij} = w_{ij}$, and the graph Laplacian as $L_n = \mbox{diag}(A_n\boldsymbol{1})-A_n$. As we discuss in the following $L_n$ provides arbitrarily good approximations of the Laplace-Beltrami (LB) operator of $\ccalM$ as $n \to \infty$. 

\subsection{The Laplace-Beltrami operator and graph Laplacian convergence} \label{sec:lb_operator}

Submanifolds of Euclidean space are locally Euclidean, meaning that in a neighborhood of any given point $u \in \ccalM$, the manifold can be approximated by an Euclidean space via its tangent space.

The tangent space of $\ccalM$ at a point $u \in \ccalM$ consists of all tangent vectors at $u$. A vector $v \in \reals^F$ is considered a tangent vector of $\ccalM$ at $u$ if there exists a smooth curve $\gamma$ such that $\gamma(0) = u$ and $\dot{\gamma}(0)=v$. That is, tangent vectors correspond to the derivatives of curves $\gamma: \reals \to \ccalM$. The tangent space at $u$, denoted $T_u\ccalM$, is therefore defined as $T_u\ccalM = \{\dot{\gamma}(0)\ |\ \mbox{smooth }\gamma: \reals \to \ccalM \mbox{ , } \gamma(0)=u\}$ \cite{robbin2022introduction}. The union of all tangent spaces across the manifold $\ccalM$ forms the tangent bundle $T\ccalM$.

With this notion of tangent space, we can define gradients of functions defined on $\ccalM$. Consider for instance the map $\ccalX: \ccalM \to \reals^F$, which satisfies $\ccalX \in C^\infty(\ccalM)$. The gradient $\nabla \ccalX \in T\ccalM$ is a vector field satisfying $\smash{\langle \nabla \ccalX(u), v \rangle = \frac{\D}{\D t}\bigr|_{t=0} (\ccalX \circ \gamma)(t)}$ for any tangent vector $v \in T_u \ccalM$ and any smooth curve $\gamma$ such that $\gamma(0)=u$ and $\dot{\gamma}(0)=v$ \cite{petersen2006riemannian}. In the opposite direction, given a smooth vector field $V \in T\ccalM$ and an orthonormal basis $e_1, \ldots, e_D$ of $T_u\ccalM$, the divergence $\mathbf{\nabla} \cdot V \in \ccalC^\infty (\ccalM)$ is defined as $\mathbf{\nabla} \cdot V = \sum_{i=1}^D \langle \partial_i V, e_i\rangle$.

By composing the gradient and divergence operators, we obtain the Laplace-Beltrami (LB) operator $\ccalL: \ccalC^\infty (\ccalM) \to \ccalC^\infty (\ccalM)$, given by \cite{berard2006spectral}
\begin{equation}\label{eqn:lb_operator}
\ccalL \ccalX = - \mathbf{\nabla} \cdot \left( \nabla \ccalX \right) \text{.}
\end{equation}
When $\ccalM$ is compact, the operator $\ccalL$ has a discrete, real, and positive spectrum, with eigenvalues $\lambda_i$ and eigenfunctions $\phi_i$, $i=1, 2, \ldots$ (arranged in increasing order of eigenvalues w.l.o.g.).

\noindent \textbf{Convergence of $L_n$ to $\ccalL$.} To relate $L_n$ with the Laplace-Beltrami operator $\ccalL$ of $\ccalM$, one can define the continuous extension $\ccalL_n$ of $L_n$ operating on $\ccalX \in C^\infty(\ccalM)$ as \cite{belkin2008towards}
\begin{equation}
\ccalL_n \ccalX(u) = \ccalX(u)\frac{1}{n}\sum_{i=1}^n e^{-\frac{\|u-u_i\|^2}{2\sigma_n^2}} - \frac{1}{n}\sum_{i=1}^n \ccalX(u_i) e^{-\frac{\|u-u_i\|^2}{2\sigma_n^2}} \text{.}
\end{equation}
By carefully choosing parameters $\{\sigma_n\}$, it can be shown that, for $\ccalX\in C^\infty(\ccalM)$, 
\begin{equation}
{\lim_{n \to \infty} \frac{1}{\sigma_n^{2m + 2}} \ccalL_n \ccalX(u)} = C_\ccalM \ccalL \ccalX(u)
\end{equation}
where $C_\ccalM$ is a constant independent of $n$. Explicitly, the Laplacian of geometric graphs constructed from embeddings $x_i$ as in \eqref{eqn:geom_graph} (which are equal to $\ccalX(u_i)$) converges point-wise to the LB operator of the underlying manifold.

\subsection{Graph semi-supervised learning} \label{sec:ss_learning}

Let $G=(\ccalV,\ccalE)$, $|\ccalV|=n$, be a graph with vertex set $\ccalV$ and edge set $\ccalE \subseteq \ccalV \times \ccalV$. Let $X \in \reals^{n\times F}$ be node attributes or features associated with the nodes of $G$; i.e., each node $i \in \ccalV$ is associated with a $F$-dimensional signal. Suppose we want to use the information in $X$ to assign each node to one of $C$ classes represented by a label vector $y \in \{1,\ldots,C\}^n$. 

The graph semi-supervised approach to this task consists of sampling a training node subset $\ccalT \subset \ccalV$ and solving the following optimization problem:
\begin{align}\label{eqn:erm_ss_learning}
   \min_{h \in \ccalH} R_\ccalT (h) &= \min_{h \in \ccalH} l(y,h(X,G);\ccalT) \nonumber\\
    &:= \min_{h \in \ccalH} \tilde{l}(M_\ccalT y,M_\ccalT h(X,G))
\end{align}
where $\ccalH$ is a hypothesis class, $\tilde{l}$ is a loss function (e.g., the $2$-norm), and $M_{\ccalT} \in \{0,1\}^{|\ccalT|\times n}$ is a matrix acting as the training mask, i.e., each row has \textit{exactly} one non-zero entry, and each column has \textit{at most} one non-zero entry. We call $l$ the semi-supervised loss. Note that, though the loss is only calculated at nodes $i \in \ccalT$, the signal information $X$ across all the nodes in $G$ is used to compute $h(X,G)$. 

Ultimately, we want $h$ to generalize well to the unseen nodes $\ccalV \setminus \ccalT$. This ability is measured by the generalization gap
\begin{equation}\label{eqn:ga}
    GA(h) = |R_{\ccalV \setminus \ccalT}(h) - R_\ccalT(h)| \text{.}
\end{equation}

\subsection{Graph neural networks (GNNs) and GNN convergence} \label{sec:gnn_conv}

GNNs are neural network (NN) architectures tailored to graphs. They have multiple layers, each consisting of a linear map followed by a nonlinear activation function, and each operation is adapted to respect the sparsity pattern of the graph. In practice, this restriction is met by parametrizing the linear map of the NN layer by a graph matrix representation, typically the adjacency matrix or Laplacian. Here, we consider the graph Laplacian $L \in \reals^{n \times n}$. The $\ell$th GNN layer is defined as \cite{ruiz2020gnns}
\vspace{-0.1cm}
\begin{align} \label{eqn:gnn_layer}
    X_{\ell} = \rho(h(X_{\ell-1}, L)) = \rho \bigg( \sum_{k=0}^{K-1} L^k X_{\ell-1} W_{\ell k} \bigg) 
\end{align}
where $X_{\ell} \in \reals^{n \times F_{\ell}}$ and $X_{\ell-1} \in \reals^{n \times F_{\ell-1}}$ are the \textit{embeddings} at layers $\ell$ and $\ell-1$, and $W_{\ell k} \in \reals^{F_{\ell-1} \times F_\ell}$ are learnable parameters. The function $\rho: \reals \to \reals$ is a nonlinear function such as the ReLU or sigmoid, which acts independently on each entry as $[\rho(X)]_{ij} = \rho([X]_{ij})$. 

For an $\mathscr{L}$-layer GNN, the GNN output is $Y = X_{\mathscr{L}}$ and, given input data $X$, $X_0 = X$. For a more compact description, we will represent this GNN as a map $Y = \Phi_\ccalW(X,L)$ parametrized by the learnable weights $\ccalW=\{W_{\ell k}\}_{\ell,k}$ at all layers. 

\noindent \textbf{Convergence to MNNs.} A manifold neural network (MNN) layer is defined pointwise at $u \in \ccalM$ as \cite{wang2021manifold,wang2024manifold}
\begin{equation}
    \ccalX_\ell(u) = \rho \left( \sum_{k=0}^{K-1} \big(e^{-k\ccalL} \ccalX_{\ell-1}\big)(u)W_{\ell k}\right) 
\end{equation}
with $\ccalX_{\ell}: \ccalM \to \reals^{F_{\ell}}$, $W_{\ell k} \in \reals^{F_{\ell-1} \times F_\ell}$, and $\rho$ nonlinear and entry-wise. Once again for compactness, given input $\ccalX_0 = \ccalX$ we represent the whole MNN as a map $\ccalY=\Phi_\ccalW(\ccalX,\ccalL)$.

The following result motivates seeing point classification on manifolds as graph semi-supervised learning, and is the cornerstone of the theoretical generalization results in the next section.

\begin{proposition}[\cite{wang2024manifold}, simplified] \label{prop:transf}
Let $\Phi_\ccalW$ be an MNN on the $d$-dimensional manifold $\ccalM$.
Let $\{u_1,\ldots,u_n\}$ be a set of points sampled uniformly from $\ccalM$ and $L_n$ the corresponding geometric graph Laplacian. Define the map $\ccalP_n: \ccalX \mapsto X_n$:
\begin{equation}
    [X_n]_{ij}=[(\ccalP_n\ccalX)(u_i)]_j = [\ccalX(u_i)]_j \text{.}
\end{equation} 
 Suppose Assumptions \ref{ass:lipschitz_filter_main_body}--\ref{ass:lipschitz_act_fn_main_body} (stated in Section \ref{sec:main_matter}) hold.
Then, with probability at least $1-\delta$,
\begin{equation}
\|\Phi_\ccalW(\ccalP_n\ccalX,L_n)-\ccalP_n\Phi_\ccalW(\ccalX,\ccalL)\| =\ccalO \bigg(\sqrt[d+4]{\frac{\log{1/\delta}}{n}} \bigg)\text{.}
\end{equation}
\end{proposition}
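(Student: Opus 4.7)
The plan is to bound the discrepancy between the discrete GNN and the continuous MNN layer by layer, reducing to a filter-level statement and then compounding Lipschitz factors across layers. Fix a layer $\ell$ and let $\tilde{X}_\ell = \ccalP_n \ccalX_\ell$ denote the sampled MNN embedding. By Lipschitz continuity of the nonlinearity $\rho$ (Assumption \ref{ass:lipschitz_main_body}), it suffices to control
\begin{equation}
\Big\| \sum_{k=0}^{K-1} L_n^k (\ccalP_n \ccalX_{\ell-1}) W_{\ell k} - \ccalP_n \sum_{k=0}^{K-1} e^{-k\ccalL} \ccalX_{\ell-1} W_{\ell k} \Big\|
\end{equation}
and then split this into a filter-discrepancy term, comparing $L_n^k$ to the sampling of $e^{-k\ccalL}$ applied to the same input $\ccalP_n\ccalX_{\ell-1}$, plus a propagated error $\ccalP_n\ccalX_{\ell-1} - X_{\ell-1}^{(n)}$ inherited from earlier layers.

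For the filter-discrepancy term I would work in the spectral bases of $L_n$ and $\ccalL$. Using the bandlimitedness hypothesis (Assumption \ref{ass:bandlimited_main_body}) to truncate both expansions to a finite number of eigenpairs, I would pair each discrete eigenvalue/eigenfunction $(\lambda_i^{(n)}, \phi_i^{(n)})$ with its continuous counterpart $(\lambda_i,\phi_i)$ and bound the resulting per-component error via the pointwise convergence $\ccalL_n/\sigma_n^{d+2} \to C_\ccalM \ccalL$ stated in Section \ref{sec:lb_operator}. The rate $\ccalO\!\bigl((\log(1/\delta)/n)^{1/(d+4)}\bigr)$ arises by choosing the bandwidth $\sigma_n$ as a $1/(2(d+4))$-power of $\log(1/\delta)/n$, which balances the kernel bias (of order $\sigma_n^2$) against the concentration variance (of order $\sigma_n^{-d-2}\sqrt{\log(1/\delta)/n}$) in the Belkin--Niyogi-style convergence argument, yielding a final error of order $\sigma_n^2 \sim (\log(1/\delta)/n)^{1/(d+4)}$.

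Once the filter-level bound is established, I would propagate it through the $\mathscr{L}$ layers by induction on $\ell$. Each layer contributes a multiplicative factor depending on the Lipschitz constant of $\rho$, the filter order $K$, and the operator norms of the weight matrices $W_{\ell k}$, all of which are bounded by constants independent of $n$. Averaging over the $n$ sampled points preserves the rate and returns the stated bound, valid with probability at least $1-\delta$ by a single union bound absorbed into the $\log(1/\delta)$ factor.

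The main obstacle is the spectral convergence step: eigenfunction convergence of $\ccalL_n$ is substantially weaker than eigenvalue convergence, and in the presence of multiplicities or near-multiplicities one must carefully align bases within eigenspaces so that the cumulative rotation error does not dominate the $1/(d+4)$-power rate. Keeping the constants in the filter bound uniform across the retained portion of the spectrum, while ensuring they remain compatible with the optimal choice of $\sigma_n$, is the delicate technical point; once this is handled, the layer-wise induction and the final averaging are routine consequences of the Lipschitz structure of MNNs and GNNs.
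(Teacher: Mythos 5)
The paper does not actually prove Proposition \ref{prop:transf}: it imports the result verbatim from \cite{wang2024manifold}, so there is no in-paper argument to compare against. Your sketch reproduces the strategy of that reference---reduce to a filter-level discrepancy via layer-wise Lipschitz propagation, control the filter term through spectral convergence of $L_n$ to $\ccalL$ with the kernel bandwidth balanced so that the bias $\sigma_n^2$ matches the concentration term and yields the $(\log(1/\delta)/n)^{1/(d+4)}$ rate---and correctly flags eigenfunction alignment in near-degenerate eigenspaces as the technical crux; the only quibble is that the normalized Lipschitz property of $\rho$ is a separate assumption stated in the appendix, not Assumption \ref{ass:lipschitz_main_body}, which concerns local Lipschitz continuity of $\Phi_\ccalW$ itself.
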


In words, on geometric graphs sampled from a manifold, a GNN with weights $\ccalW$ converges to an MNN with the same set of weights. This constitutes a transferability result, justifying training GNNs on graphs sampled from the manifold, freezing the learned parameters, and applying the model either to the manifold or to additional graphs sampled from the same manifold.
\section{Classification as graph semi-supervised learning} \label{sec:main_matter}

\begin{figure*}[t]
    \centering
    \begin{subfigure}[t]{0.36\textwidth}
        \centering
        \raisebox{0.1\height}{%
            \includegraphics[width=1.05\textwidth]{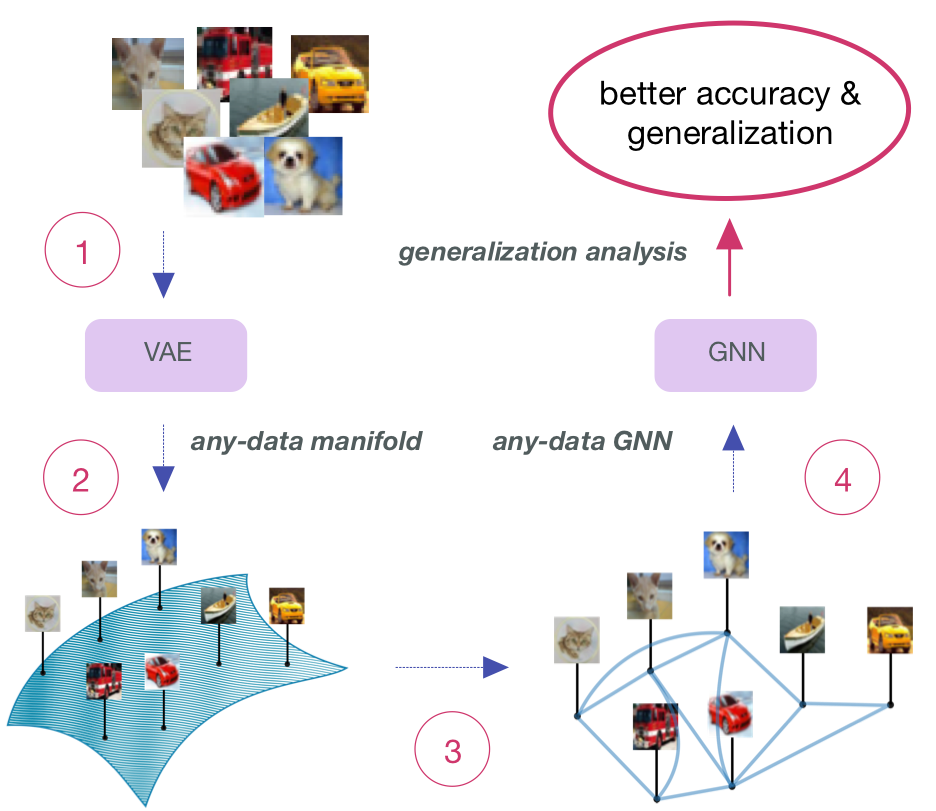}
        }
        \caption{}
        \label{fig:image_manifold}
    \end{subfigure}%
    \hspace{7em}
    \begin{subfigure}[t]{0.37\textwidth}
        \centering
        \begin{setup}
        \setlength{\leftmargin}{5pt}
        \begin{itemize}[leftmargin=*]
            \item The manifold $\ccalM$ is a $d$-dimensional embedded submanifold of $\reals^F$.
            \item We randomly sample $n$ points from $\ccalM$, forming the $n$-node geometric graph $G_n$ with Laplacian $L_n$.
            \item The loss $\tilde{l}$ in \eqref{eqn:erm_ss_learning} is the mean $\ell_2$ norm.
            \item Data $X_n \in \reals^{n \times F}$ is defined as in \eqref{eqn:signal_defn}, with rows $[X_n]_{i:} \in \reals^F$.
            \item GNN sees all of $X_n$ at training, but the loss is computed only on a training subset $\ccalT$ of size $p$.
            \item The test set is $\{1,\ldots,n\} \setminus \ccalT$, with size $q$, so $p+q=n$.
        \end{itemize}
        \end{setup}
        \caption{}
        \label{fig:setup}
    \end{subfigure}
    \caption{\small{(a) Framework schematic. We start by constructing VAE embeddings (1), computing their pairwise distances to form manifolds (2), and sampling graphs from the manifolds (3). GNNs are trained on these graphs to leverage geometric information for image classification (4). (b) Setup for Theorems \ref{thm:ga_bound_1}--\ref{thm:MNN_learning} and Corollary \ref{thm:main}.}}
\end{figure*}

Consider a standard classification task in which the goal is to assign data $\mathrm{X} \in \ccalS_\mathrm{X}$ (the sample space $\ccalS_\mathrm{X}$ is arbitrary) to one of $C$ classes using labels $\mathrm{y} \in \{1,\ldots,C\}$. Given labeled data $\{\mathrm{X_{m}},\mathrm{y}_{m}\}_{m=1}^M$, the classical supervised learning approach consists of selecting a training set $\ccalT \subset \{1,\ldots,M\}$; minimizing some loss over $\ccalT$; and computing the classification accuracy on the test set $\{1,\ldots,M\} \setminus \ccalT$ to evaluate the ability of the model to generalize.

Leveraging the manifold hypothesis, this problem can be parametrized in a different way. The data $\mathrm{X}_m$ are high-dimensional feature vectors, but under the manifold hypothesis, they admit lower-dimensional representations as points $u_m \in \ccalM$ with $\ccalM$ a $d$-dimensional embedded submanifold of $\reals^F$. Suppose we know the map $\psi: \ccalS_{\mathrm{X}} \to \ccalM$ that achieves such lower-dimensional representations, and also the map $\ccalX: \ccalM \to \reals^F$ allowing to write $u \in \ccalM$ in ambient space coordinates as $\ccalX(u) \in \reals^F$. Then we can represent $\mathrm{X_m} \in \ccalS_\mathrm{X}$ as $x_m = \ccalX(\psi(\mathrm{X_m})) \in \reals^F$.

As discussed in Section \ref{sec:geom_graph_approx}, the embeddings $x_m$, when learned, can be used to approximate the manifold $\ccalM$ via a geometric graph $G$ where each sample $m$ is a node and each edge has weight $w_{mm'} = \exp{(-\|x_m-x_{m'}\|^2/2\sigma^2)}$ for $m\neq m'$ [cf. \eqref{eqn:geom_graph}]. Here, we will instead see the graph $G$ as the support of the graph semi-supervised learning problem from Section \ref{sec:ss_learning} parametrized by a GNN.

Specifically, on the graph $G$ define the node attribute matrix $X \in \reals^{n \times F}$ where 
\begin{equation} \label{eqn:signal_defn}
[X]_{i:} = x_i
\end{equation} 
i.e., row $i$ stores the embedding vector corresponding to node $i$. Define also the label vector $y \in \{1,\ldots,C\}^n$ where $[y]_m = \mathrm{y}_m$. The goal is to solve the minimization problem in \eqref{eqn:erm_ss_learning} over hypothesis class 
$\ccalH = \{\Phi_\ccalW(X,L) \text{ s.t. } \ccalW = \{W_{\ell k}\}_{\ell,k}, W_{\ell k} \in \reals^{F_{\ell-1}\times F_\ell}\}$
where $\Phi_\ccalW$ is the GNN composed by layers \eqref{eqn:gnn_layer} and $L$ is the graph Laplacian.

\subsection{Generalization}

The rationale for reformulating standard point classification as semi-supervised learning on a graph is to exploit the geometry in the data to improve predictive performance, as supported by Prop.~\ref{prop:transf}. We first demonstrate this theoretically by showing that the generalization gap of graph semi-supervised learning on geometric graphs sampled from a manifold decreases asymptotically with the graph size.

Before stating our results, we first define our setup in Figure \ref{fig:setup} and state the following assumptions and lemmas.

\begin{assumption} \label{ass:lipschitz_filter_main_body}
The convolutional maps in $\Phi_\ccalW$ are locally Lipschitz continuous on $\ccalM$ and have norm at most 1. 
\end{assumption}

\begin{assumption}\label{ass:bandlimited_main_body}
    The convolutions in all layers of $\Phi_\ccalW$ are low-pass filters with bandwidth $c$, i.e., if $\ccalY$ is the output of a convolution, $\langle \ccalY, \phi_i\rangle = 0$ for $\lambda_i > c$, and $i_c = \argmin_i (\lambda_i - c)\boldsymbol{1}(\lambda_i  \geq c)$.
\end{assumption}


\begin{assumption}\label{ass:lipschitz_act_fn_main_body}
The nonlinear function $\rho$ and its first-order derivative $\rho'$ have Lipschitz constant $1$ and $\rho(0)=0$, i.e., the function is normalized Lipschitz continuous.
\end{assumption}

\begin{theorem}[An unsatisfactory generalization bound]\label{thm:ga_bound_1}
Under \textbf{Setup}, suppose the minimum of the optimization problem in \eqref{eqn:erm_ss_learning} is achieved by $\Phi_{\ccalW_G^*}$, i.e., by the GNN with weights $\ccalW_G^*$, and that Assumptions \ref{ass:lipschitz_filter_main_body}--\ref{ass:lipschitz_act_fn_main_body} hold. Let $p > q$. With probability at least $1-\delta$,
\begin{equation} \label{eqn:incomplete_ga_bd}
\small
GA(\Phi_{\ccalW_G^*}) = \ccalO \bigg(\frac{1}{i_c} + \sqrt[d+4]{\frac{\log{1/\delta}}{n}} 
+ \frac{p-q}{pq}\tilde{l}(\ccalY,\Phi_{\ccalW_{G}^*}(\ccalX,\ccalL))\bigg) \text{.}
\normalsize
\end{equation}
\end{theorem}

\begin{proof}
    Let $R_{\ccalT}(\ccalW_{G}^*) = \tfrac{1}{p}\tilde{l}(M_\ccalT Y_n, M_\ccalT \Phi_{\ccalW_{G}^*}(X_n, L_n))$ and $R_{\ccalV \setminus \ccalT}(\ccalW_{G}^*) = \tfrac{1}{q}\tilde{l}(M_{\ccalV \setminus \ccalT} Y_n, M_{\ccalV \setminus \ccalT}\Phi_{\ccalW_{G}^*}(X_n, L_n))$ be the training and test error, respectively.
    Taking the $L_2$ loss as our loss function, we have that
    \[
    \begin{aligned}
        &R_{\ccalT}(\ccalW_{G}^*) = \dfrac{1}{p}\|M_\ccalT\Phi_{\ccalW_{G}^*}(X_n, L_n) - M_\ccalT Y_n\|_2 \nonumber \\
        &\hspace{1em} = \dfrac{1}{p}\left[\sum_{i\in \ccalT} (\Phi_{\ccalW_{G}^*}(X_n, L_n))_i - \Phi_{\ccalW_{G}^*}(\ccalX,\ccalL)(x_i))^2\right]^{1/2},\\
        &R_{\ccalV \setminus \ccalT}(\ccalW_{G}^*) = \dfrac{1}{q}\|M_{\ccalV \setminus \ccalT}\Phi_{\ccalW_{G}^*}(X_n, L_n) - M_{\ccalV \setminus \ccalT} Y_n\|_2 \nonumber \\
        &\hspace{1em} = \dfrac{1}{q}\left[\sum_{i\in \ccalV \setminus \ccalT} (\Phi_{\ccalW_{G}^*}(X_n, L_n))_i - \Phi_{\ccalW_{G}^*}(\ccalX,\ccalL)(x_i))^2\right]^{1/2}.
    \end{aligned}
    \]
    Under the transductive learning setting, the generalization gap $GA(\Phi_{\ccalW_{G}^*}) = \left|R_{\ccalV \setminus \ccalT}(\ccalW_{G}^*) - R_{\ccalT}(\ccalW_{G}^*)\right|$ is bounded as follows
    \[
    \begin{aligned}
        &GA(\Phi_{\ccalW_{G}^*}) \nonumber \\
        &= \biggl|\dfrac{1}{q}\tilde{l}(M_{\ccalV \setminus \ccalT} Y_n, M_{\ccalV \setminus \ccalT}\Phi_{\ccalW_{G}^*}(X_n, L_n))\nonumber \\
        &\quad - \dfrac{1}{p}\tilde{l}(M_\ccalT Y_n, M_\ccalT \Phi_{\ccalW_{G}^*}(X_n, L_n)\biggr| 
        \underset{(\pm) \tfrac{(p + q)}{pq}\tilde{l}(\ccalY,\Phi_{\ccalW_{G}^*}(\ccalX,\ccalL))}{\leq} \nonumber \\
        &\biggl|\left(\dfrac{1}{q}\tilde{l}(M_{\ccalV \setminus \ccalT} Y_n, M_{\ccalV \setminus \ccalT}\Phi_{\ccalW_{G}^*}(X_n, L_n)) - \dfrac{1}{q}\tilde{l}(\ccalY,\Phi_{\ccalW_{G}^*}(\ccalX,\ccalL))\right) \nonumber\\
            &\quad+\left(\dfrac{1}{p}\tilde{l}(\ccalY,\Phi_{\ccalW_{G}^*}(\ccalX,\ccalL)) - \dfrac{1}{p}\tilde{l}(M_\ccalT Y_n, M_\ccalT \Phi_{\ccalW_{G}^*}(X_n, L_n)\right)\nonumber\\
            &\quad+\left(\dfrac{(p-q)}{pq}\tilde{l}(\ccalY,\Phi_{\ccalW_{G}^*}(\ccalX,\ccalL))\right)\biggr|\\
            \nonumber\\
            &\leq \underbrace{\dfrac{1}{q}\left|\tilde{l}(M_{\ccalV \setminus \ccalT} Y_n, M_{\ccalV \setminus \ccalT}\Phi_{\ccalW_{G}^*}(X_n, L_n)) - \tilde{l}(\ccalY,\Phi_{\ccalW_{G}^*}(\ccalX,\ccalL))\right|}_{\circled{2}}\nonumber\\
            &\quad+\underbrace{\dfrac{1}{p}\left|\tilde{l}(\ccalY,\Phi_{\ccalW_{G}^*}(\ccalX,\ccalL)) -\tilde{l}(M_\ccalT Y_n, M_\ccalT \Phi_{\ccalW_{G}^*}(X_n, L_n)\right|}_{\circled{3}}\nonumber\\
            &\quad+\tfrac{(p - q)}{pq}|\tilde{l}(\ccalY,\Phi_{\ccalW_{G}^*}(\ccalX,\ccalL))|.
    \end{aligned}
    \]

    From Lemma~\ref{lem:ga_bound_1}, we have that
    \[
        \begin{aligned}
            |\tilde{l}(\ccalY,\Phi_\ccalW(\ccalX,\ccalL))-\tilde{l}(M_\ccalT Y_n, M_\ccalT \Phi_\ccalW(X_n,L_n))|\nonumber\\
            = \ccalO \bigg(\frac{1}{i_c} + \sqrt[d+4]{\frac{\log{1/\delta}}{n}} \bigg) \nonumber.
        \end{aligned}
    \]
    That completes the proof since the previous lemma provides the bounds for \circled{2} and \circled{3}.
\end{proof}

We observe that the generalization gap is upper-bounded by three terms. The first is a term relating to the convolutional filter bandwidth $c$, which is constant but small for filters with sufficiently high bandwidth. The second arises from the convergence of GNNs on graph sequences converging to a manifold [cf. Prop. \ref{prop:transf}], vanishing asymptotically with $n$. The third depends on the training and test set sizes $p$ and $q$, as well as on the loss achieved by GNN $\Phi_{\ccalW_G^*}$ on the \textit{entire} manifold $\ccalM$. 

The third term is interesting as the factor $(p-q)/pq$ highlights the role of the statistical imbalance, i.e., of the different training and test set sizes, on the generalization gap. To see this, consider the common scenario in which $p$ and $q$ are fixed proportions of $n$, e.g., $p=\nu n$ and $q=(1-\nu)n$. Then, the third term in \eqref{eqn:incomplete_ga_bd} is $\ccalO(n^{-1}\tilde{l}(\ccalY,\Phi_{\ccalW_{G}^*}(\ccalX,\ccalL))$ unless $\nu=0.5$ -- i.e., balanced training and test sets --, in which case this term is zero and does not contribute to the generalization bound.

When $\nu > 0.5$, the extent to which the third term dominates or not the generalization gap depends on the loss realized by $\Phi_{\ccalW_{G}^*}$ on $\ccalM$. This dependence is unsatisfactory for two reasons. Since the loss is computed over all of $\ccalM$, it depends on the test set; and further, since $\Phi_{\ccalW_G^*}$ is optimized for $G_n$ (and not for $\ccalM$), it is not even clear that $\Phi_{\ccalW_G^*}$ minimizes the loss on the manifold. 

Specifically, we can chain Theorem \ref{thm:ga_bound_1} and Lemma \ref{lem:ga_bound_1} once more to derive an upper bound on the generalization gap that no longer depends on the loss on the entire manifold, but rather on the minimum semi-supervised training loss on the graph $G_n$: 
\begin{equation}
    l_G^* = \tilde{l}(M_\ccalT Y_n, M_\ccalT \Phi_{\ccalW_G^*}(X_n,L_n)) \text{.}
\end{equation}

\begin{theorem}[A satisfactory generalization bound]\label{thm:ga_bound_2}
    Under \textbf{Setup}, suppose the minimum of the optimization problem in \eqref{eqn:erm_ss_learning} is achieved by $\Phi_{\ccalW_G^*}$, i.e., by the GNN with weights $\ccalW_G^*$, and that Assumptions \ref{ass:lipschitz_filter_main_body}--\ref{ass:lipschitz_act_fn_main_body} hold. Let $p > q$. With probability at least $1-\delta$, 
    \begin{equation} \label{eqn:better_ga_bd}
    GA(\Phi_{\ccalW_G^*})
    = \ccalO \bigg(\frac{p}{q i_c} + \sqrt[d+4]{\frac{\log{1/\delta}}{n}} + \frac{p-q}{pq}l^*_G\bigg) \text{.}
    \end{equation}
\end{theorem}
\begin{proof}
    \begin{align}
        &\tfrac{(p - q)}{pq}|\tilde{l}(\ccalY,\Phi_{\ccalW_{G}^*}(\ccalX,\ccalL))| \underset{\pm\tilde{l}(M_\ccalT Y_n, M_\ccalT \Phi_{\ccalW_{G}^*}(X_n, L_n))}{=} \nonumber\\
        &\tfrac{(p - q)}{pq}|\tilde{l}(\ccalY,\Phi_{\ccalW_{G}^*}(\ccalX,\ccalL)) - \tilde{l}(M_\ccalT Y_n, M_\ccalT \Phi_{\ccalW_{G}^*}(X_n, L_n)) \nonumber \\
        &\hspace{2em}+ \tilde{l}(M_\ccalT Y_n, M_\ccalT \Phi_{\ccalW_{G}^*}(X_n, L_n))|= \nonumber\\
        &\underbrace{\tfrac{(p-q)}{pq}|\tilde{l}(\ccalY,\Phi_{\ccalW_{G}^*}(\ccalX,\ccalL)) - \tilde{l}(M_\ccalT Y_n, M_\ccalT \Phi_{\ccalW_{G}^*}(X_n, L_n))|}_{\circled{3}, \ \textbf{mult. by factor} \ \tfrac{(p-q)}{pq}} \nonumber \\
        &\hspace{2em}+ \tfrac{(p-q)}{pq}|\tilde{l}(M_\ccalT Y_n, M_\ccalT \Phi_{\ccalW_{G}^*}(X_n, L_n))|.
    \end{align}

    Finally, recapping the definition for the minimum semi-supervised training loss on the graph $G$ as
    \begin{align}
        l_G^* = \tilde{l}(M_\ccalT Y_n, M_\ccalT \Phi_{\ccalW_G^*}(X_n, L_n)),
    \end{align}
    with some additional algebraic manipulation of the constant factors, we achieve the bound. 
\end{proof}

The generalization bound in Theorem \ref{thm:ga_bound_2} is more satisfactory, as now the term depending on the loss realized by the GNN can be controlled through optimization over the training set $\ccalT$. However, this comes at the cost of an increase in the constant term from $1/i_c$ in Theorem \eqref{thm:ga_bound_1} to $p/(q i_c)$ in Theorem \ref{thm:ga_bound_2}. In modern machine learning, one typically has significantly more training samples $p$ than test samples $q$. Hence, this increase might be non-negligible in practice.
\subsection{Learning on graphs of increasing size}
In this section we discuss an alternative GNN training algorithm inspired by \cite{cervino2021increase} allowing to directly minimize $\tilde{l}(\ccalY,\Phi_\ccalW(\ccalX,\ccalL))$, the loss on the manifold, and as such to curb the increase in the generalization gap observed in Theorem \ref{thm:ga_bound_2}.

The algorithm is rather simple. Instead of fixing the graph $G_n$ during the entire training process, we instead start from an $n_0$-node graph $G_{n_0}$ and, after $\Delta t$ gradient updates over this graph, resample a graph $G_{n_1}$ with $n_1 = n_0 + \Delta n$ from $\ccalM$. We proceed to do $\Delta t$ gradient updates over $G_{n_1}$, then resample $G_{n_2}$ and repeat. 
Explicitly, the $k$th iterate is given by
\begin{align}
	\ccalW_{k+1}&=\ccalW_k-\eta_k \nabla_{\ccalW} l(Y_{n_t},\Phi(X_{n_t},L_{n_t})), \label{eqn:GNN_Learning_Step}
\end{align}
with $t= \lfloor k / \Delta t \rfloor$.

Under mild assumptions, it can be shown that the GNN obtained by solving problem \eqref{eqn:erm_ss_learning} on this graph sequence minimizes the empirical risk on the manifold $\ccalM$.

\begin{theorem}\label{thm:MNN_learning}
Under \textbf{Setup}, let $\Phi_{\ccalW}$ be a GNN learned with iterates \eqref{eqn:GNN_Learning_Step}. If at each step $k$ the number of nodes $n_t$ is such that 
\begin{align}
	\mbE[\|\nabla_{\ccalW}\tilde{l}(\ccalY,\Phi_{\ccalW_k}(\ccalX,\ccalL)) -\nabla_{\ccalW} l(Y_{n_t},\Phi_{\ccalW_k}(X_{n_t},L_{n_t}))\|]\nonumber\\
	 < \|\nabla_{\ccalW}\tilde{l}(\ccalY,\Phi_{\ccalW_k}(\ccalX,\ccalL))\| - {\epsilon},
\end{align}
then after at most $k^*=\ccalO(1/\epsilon^2)$ iterations $\Phi_{\ccalW_{G_{n_t}}^*} = \Phi_{\ccalW_{k^*}}$ is within an $\epsilon$-neighborhood of the solution of the empirical risk minimization problem on $\ccalM$. 
\end{theorem}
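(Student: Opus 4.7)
The plan is to view this as an inexact-gradient descent result where the stochastic surrogate is the graph gradient and the true objective is the manifold loss. Let $f(\ccalW):=\tilde{l}(\ccalY,\Phi_\ccalW(\ccalX,\ccalL))$ denote the manifold loss and $g_k:=\nabla_\ccalW l(Y_{n_t},\Phi_{\ccalW_k}(X_{n_t},L_{n_t}))$ the surrogate used at step $k$, so the iterate reads $\ccalW_{k+1}=\ccalW_k-\eta_k g_k$, with bias $e_k:=g_k-\nabla f(\ccalW_k)$. The hypothesis then rewrites as $\mbE\|e_k\|\le\|\nabla f(\ccalW_k)\|-\epsilon$, which is the classical inexact-oracle condition; it implicitly requires $\|\nabla f(\ccalW_k)\|>\epsilon$, and hence only needs to hold before the trajectory has entered the target $\epsilon$-neighborhood of a stationary point.

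The next step is to apply the descent lemma for $\beta$-smooth $f$, where smoothness of $f$ in $\ccalW$ follows from Lipschitz continuity of $\Phi_\ccalW$ in its weights (one of the assumptions spelled out in Appendix \ref{app:pfs_juan}):
\begin{equation*}
f(\ccalW_{k+1})\le f(\ccalW_k)-\eta_k\langle\nabla f(\ccalW_k),g_k\rangle+\tfrac{\beta\eta_k^2}{2}\|g_k\|^2.
\end{equation*}
Splitting $\langle\nabla f(\ccalW_k),g_k\rangle=\|\nabla f(\ccalW_k)\|^2+\langle\nabla f(\ccalW_k),e_k\rangle$, taking expectations, and applying Cauchy--Schwarz with the hypothesis yields a net expected descent of $\eta_k\epsilon\|\nabla f(\ccalW_k)\|$ modulo an $O(\eta_k^2\,\mbE\|g_k\|^2)$ term. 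The latter is controlled by bounding $\mbE\|g_k\|^2\le 2\|\nabla f(\ccalW_k)\|^2+2\mbE\|e_k\|^2$ using the hypothesis once more, and then picking $\eta_k$ small enough (either a constant $\eta=\Theta(\epsilon)$ or a standard diminishing schedule) so that the first-order descent dominates. Telescoping from $k=0$ to $K-1$, using $f\ge f^\ast$, gives
\begin{equation*}
\sum_{k=0}^{K-1}\eta_k\,\epsilon\,\mbE\|\nabla f(\ccalW_k)\|\ \le\ f(\ccalW_0)-f^\ast + O\Big(\sum_{k=0}^{K-1}\eta_k^2\Big),
\end{equation*}
so at most $K=\ccalO(1/\epsilon^2)$ iterates can sustain $\|\nabla f(\ccalW_k)\|>\epsilon$ before the process enters the $\epsilon$-neighborhood, which is precisely the stated rate.

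The main obstacle is not the telescoping itself but rather the \emph{feasibility} of the hypothesis along the trajectory: one must argue that by scheduling the resampling so that $n_t$ grows appropriately, the graph-to-manifold gradient bias $\mbE\|e_k\|$ really can be kept below $\|\nabla f(\ccalW_k)\|-\epsilon$ at every step. This is where Proposition \ref{prop:transf} enters, since its $(d+4)$-th root convergence rate in $n$ governs the gap between graph and manifold GNN outputs; one then promotes this to a gradient bound via the chain rule and Lipschitzness of the loss derivative in $\ccalW$, which dictates how fast $n_0$ and $\Delta n$ must grow to dominate the required slack uniformly in $k$. Secondary technical points -- compactness of the iterates $\ccalW_k$, boundedness of the activations' derivatives, and the promotion of pointwise GNN convergence to gradient convergence -- are the likely substance of the omitted assumptions in Appendix \ref{app:pfs_juan}, and must be invoked to close the loop cleanly.
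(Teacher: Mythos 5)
Your proposal is correct and follows essentially the same route as the paper: the appendix proves a per-step expected descent of order $\eta\epsilon^2$ via a smoothness/descent-lemma argument in the parameters (Lemma C.10, using the Bertsekas-style integral form rather than your quadratic upper bound, with $\eta_k<1/\theta$ playing the role of your step-size choice), then telescopes against a stopping time to get $k^*=\ccalO(1/\epsilon^2)$. The feasibility of the gradient-bias hypothesis is handled exactly where you predict — by promoting the GNN-to-MNN output convergence of Proposition \ref{prop:transf} to convergence of parameter gradients through layer-wise chain-rule and Lipschitz bounds (Lemmas C.6--C.7 and Proposition C.8).
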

\begin{proof}
For every $\epsilon>0$, we define the stopping time $k^*$ as
\begin{align}
    k^*:=\min_{k\geq 0}\{\mbE[\|\nabla_{\ccalW} \tilde{l}(Y,\Phi_{\ccalW_k}(\ccalX,\ccalL))\|]\leq  \gamma \varepsilon+\epsilon \}.
\end{align}
Given the final iterates at $k=k^*$ and the initial values at $k=0$, we can express the expected difference between the loss $\tilde{l}$ as the summation over the difference of iterates,
\begin{align}
    &\mbE[\tilde{l}(Y,\Phi_{\ccalW_0}(\ccalX,\ccalL))-\tilde{l}(\ccalY,\Phi_{\ccalW_{k^*}}(\ccalX,\ccalL))] \nonumber\\
    &= \mbE\left[\sum_{k=1}^{k^*}\tilde{l}(\ccalY,\Phi(\ccalX;\ccalH_{k-1},\ccalL))-\tilde{l}(\ccalY,\Phi(\ccalX;\ccalH_{k},\ccalL)\right] \text{.} 
\end{align}
    Taking the expected value with respect to the final iterate $k=k^*$, we get
\begin{align}
    &\mbE\bigg[\tilde{l}(\ccalY,\Phi_{\ccalW_0}(\ccalX,\ccalL))-\tilde{l}(\ccalY,\Phi_{\ccalW_{k^*}}(\ccalX,\ccalL))\bigg] \nonumber\\
    &=\mathop{\mbE}_{k^*}\bigg[\mbE\bigg[\sum_{k=1}^{k^*}\tilde{l}(\ccalY,\Phi_{\ccalW_{k-1}}(\ccalX,\ccalL))-\tilde{l}(\ccalY,\Phi_{\ccalW_k}(\ccalX,\ccalL)\bigg]\bigg] \nonumber\\
    &=\sum_{t=0}^\infty \mbE\bigg[\sum_{k=1}^{t}\tilde{l}(\ccalY,\Phi_{\ccalW_{k-1}}(\ccalX,\ccalL)) \nonumber\\
    &\hspace{4em}-\tilde{l}(\ccalY,\Phi_{\ccalW_k}(\ccalX,\ccalL)\bigg]P(k^*=t) \text{.}\label{eqn:lemma6_law_of_total_probability}
\end{align}
Lemma \ref{lemma:martingale} applied to any $k\leq k^*$ verifies
\begin{align}
    &\mbE\bigg[\tilde{l}(\ccalY,\Phi_{\ccalW_{k-1}}(\ccalX,\ccalL))-\tilde{l}(\ccalY,\Phi_{\ccalW_k}(\ccalX,\ccalL))\bigg] \geq \eta \gamma \epsilon^2 \text{.}
\end{align}
Coming back to \eqref{eqn:lemma6_law_of_total_probability}, we get
\begin{align}
    &\mbE\bigg[\tilde{l}(\ccalY,\Phi(\ccalX;\ccalH_{0},\ccalL))-\tilde{l}(\ccalY,\Phi_{\ccalW_{k^*}}(\ccalX,\ccalL))\bigg] \nonumber\\
    &\geq \eta\gamma \epsilon^2 \sum_{t=0}^\infty t P(k^*=t)= \eta \gamma \epsilon^2 \mbE[k^*] \text{.}
\end{align}
Since the loss function $\tilde{l}$ is non-negative, 
\begin{align}
    \frac{\mbE\big[\tilde{l}(\ccalY,\Phi_{\ccalW_0}(\ccalX,\ccalL))\big]}{ \eta \gamma \epsilon^2 }
    &\geq  \mbE[k^*],
\end{align}
from which we conclude that $k^* =\ccalO(1/\epsilon^2)$.
\end{proof}

This result is of independent interest, as it prescribes an algorithm for achieving approximate solutions of risk minimization problems on manifolds by solving them on sequences of geometric graphs. 
In our specific context, it further allows one to obtain GNNs with improved generalization gap. This is done by combining Theorems \ref{thm:ga_bound_1} and \ref{thm:MNN_learning} in the following corollary.

\begin{corollary}[A better generalization bound]\label{thm:main}
    Let $l^*_\ccalM = \min_\ccalW \tilde{l}(\ccalY,\Phi_\ccalW(\ccalX,\ccalL))$. Under \textbf{Setup}, let $\Phi_{\ccalW_{G_{n_t}}^*}$ be the GNN learned on a sequence of graphs as in Theorem \ref{thm:MNN_learning}. With probability at least $1-\delta$, 
    \begin{equation} \label{eqn:last_ga_bd}
    GA(\Phi_{\ccalW_{G_{n_t}^*}})
    = \ccalO \bigg(\frac{1}{i_c} + \sqrt[d+4]{\frac{\log{1/\delta}}{n}} + \frac{p-q}{pq}(l^*_\ccalM + \epsilon)\bigg) \text{.}
    \end{equation}
\end{corollary}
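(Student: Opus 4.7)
The plan is to apply Theorem \ref{thm:ga_bound_1} to the weights $\ccalW_{G_{n_t}^*}$ returned by the graph-resampling training procedure of Theorem \ref{thm:MNN_learning}, and then use the conclusion of Theorem \ref{thm:MNN_learning} to replace the intractable manifold-loss term $\tilde{l}(\ccalY,\Phi_{\ccalW_{G_{n_t}^*}}(\ccalX,\ccalL))$ appearing on the right-hand side by $l^*_\ccalM + \ccalO(\epsilon)$. Both ingredients are already in hand; the real work is in observing that Theorem \ref{thm:ga_bound_1} actually holds at any admissible weights, not only at the graph-side minimizer, and in converting the notion of ``$\epsilon$-optimality'' delivered by Theorem \ref{thm:MNN_learning} into a bound on the manifold loss.

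Step one is to re-examine the proof of Theorem \ref{thm:ga_bound_1} and note that while the statement is phrased in terms of the graph-side minimizer $\ccalW_G^*$, the only role of optimality in the argument is to label the manifold-loss term that appears on the right-hand side; the chain of estimates itself---convergence of $L_n$ to $\ccalL$ through Proposition \ref{prop:transf}, low-pass truncation via $i_c$, and the $(p-q)/(pq)$ sample-imbalance computation---does not invoke any first-order optimality condition. Re-running that argument at the trained weights $\ccalW_{G_{n_t}^*}$ therefore yields, with probability at least $1-\delta$,
\[
GA(\Phi_{\ccalW_{G_{n_t}^*}}) = \ccalO\bigg(\frac{1}{i_c} + \sqrt[d+4]{\frac{\log(1/\delta)}{n}} + \frac{p-q}{pq}\,\tilde{l}\big(\ccalY,\Phi_{\ccalW_{G_{n_t}^*}}(\ccalX,\ccalL)\big)\bigg)\text{.}
\]

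Step two is to plug in Theorem \ref{thm:MNN_learning}, which guarantees that after $k^* = \ccalO(1/\epsilon^2)$ iterations of the resampling procedure, the iterate $\ccalW_{G_{n_t}^*}$ lies in an $\epsilon$-neighborhood of a minimizer of $\ccalW \mapsto \tilde{l}(\ccalY,\Phi_\ccalW(\ccalX,\ccalL))$. Combined with the regularity of this map in $\ccalW$ (inherited from the smoothness hypotheses underpinning Theorem \ref{thm:MNN_learning} itself), this yields $\tilde{l}(\ccalY,\Phi_{\ccalW_{G_{n_t}^*}}(\ccalX,\ccalL)) \le l^*_\ccalM + \ccalO(\epsilon)$. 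Substituting into the display above and absorbing the absolute constant into the big-$\ccalO$ produces the statement of the corollary, the overall probability remaining $1-\delta$ since the only probabilistic event comes from Theorem \ref{thm:ga_bound_1}.

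The main obstacle is the translation in step two: if ``$\epsilon$-neighborhood'' in Theorem \ref{thm:MNN_learning} is interpreted in parameter space, one needs Lipschitz continuity of $\ccalW \mapsto \tilde{l}(\ccalY,\Phi_\ccalW(\ccalX,\ccalL))$, which follows from the local Lipschitzness of $\Phi_\ccalW$ on $\ccalM$ (Assumption \ref{ass:lipschitz_main_body}) together with smoothness of the GNN in its weights; if instead it is interpreted as gradient-norm smallness---as is typical for SGD guarantees in the non-convex setting that give an $\ccalO(1/\epsilon^2)$ rate---one applies a standard descent-lemma argument to convert a small gradient norm into small suboptimality. Either way the extra $\epsilon$ slack enters additively in the loss term and is multiplied by $(p-q)/(pq)$, matching the stated bound exactly.
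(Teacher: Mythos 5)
Your proposal is correct and follows essentially the same route as the paper, which gives no separate appendix proof for this corollary and simply instantiates the bound of Theorem \ref{thm:ga_bound_1} at the weights produced by the resampling procedure and then invokes Theorem \ref{thm:MNN_learning} to replace the manifold-loss term by $l^*_\ccalM + \epsilon$. If anything, you are more careful than the paper in flagging that the ``$\epsilon$-neighborhood'' guarantee of Theorem \ref{thm:MNN_learning} is really a gradient-norm (stationarity) condition and that some additional regularity is needed to turn it into the loss bound $\tilde{l}(\ccalY,\Phi_{\ccalW_{G_{n_t}^*}}(\ccalX,\ccalL)) \le l^*_\ccalM + \ccalO(\epsilon)$, a step the paper takes for granted.
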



This theorem provides both a tighter generalization bound and a practical training guarantee. Relative to Theorem \ref{thm:ga_bound_2}, the first term no longer depends on the fraction between training and test set sizes ($p$ and $q$, respectively), which removes the loose scaling and yields a uniformly tighter bound across splits. Relative to Theorem \ref{thm:ga_bound_1}, it is more practical because it pairs the bound with a constructive procedure, i.e., training a GNN on geometric graphs sampled from the underlying manifold and applying the increasing graph-size algorithm. Since optimizing directly on the manifold is intractable, this theorem justifies the graph approximation and certifies that, for suitable graph sizes and sample budgets, the learned GNN attains a generalization gap within an $\epsilon$-neighborhood of the minimum loss on the manifold.












\section{Experiments}

\begin{figure*}[t]
    \centering
    \begin{subfigure}[t]{0.35\textwidth}
        \centering
        \includegraphics[width=\linewidth,height=3.9cm]{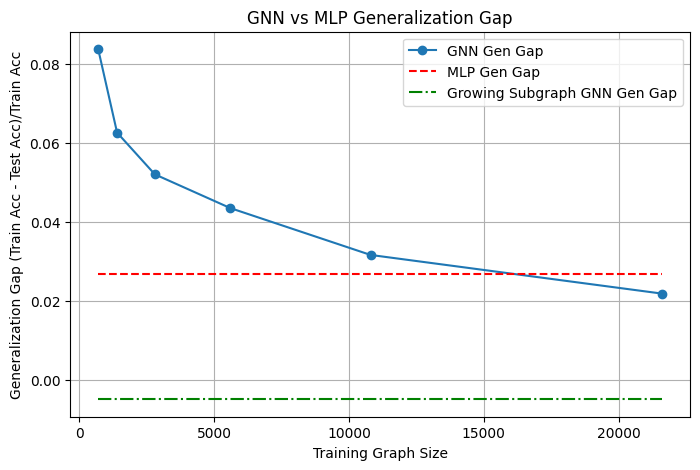}
        \caption{MNIST}
        \label{fig:gen_gap_mnist}
    \end{subfigure}
    \hspace{3.0em}
    \begin{subfigure}[t]{0.35\textwidth}
        \centering
        \includegraphics[width=\linewidth,height=3.9cm]{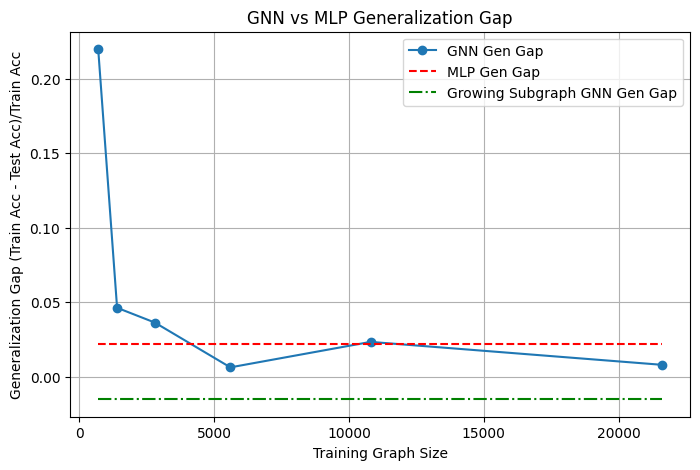}
        \caption{FMNIST}
        \label{fig:gen_gap_fmnist}
    \end{subfigure}
    \hfill
    \begin{subfigure}[t]{0.35\textwidth}
        \centering
        \includegraphics[width=\linewidth,height=3.9cm]{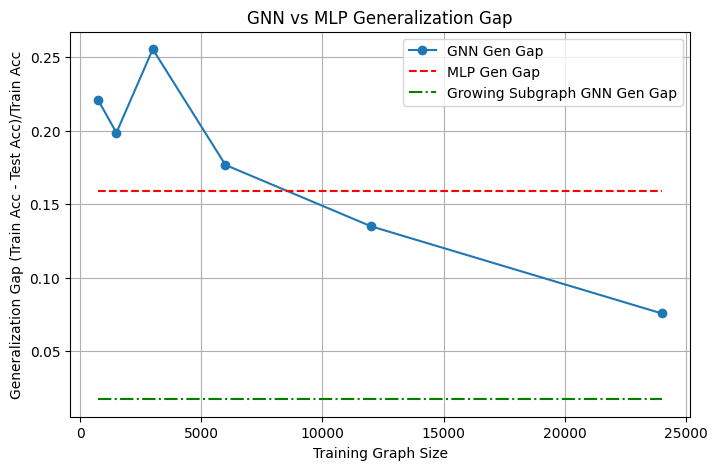}
        \caption{CIFAR10}
        \label{fig:gen_gap_cifar10}
    \end{subfigure}
    \hspace{3.0em}
    \begin{subfigure}[t]{0.35\textwidth}
        \centering
        \includegraphics[width=\linewidth,height=3.9cm]{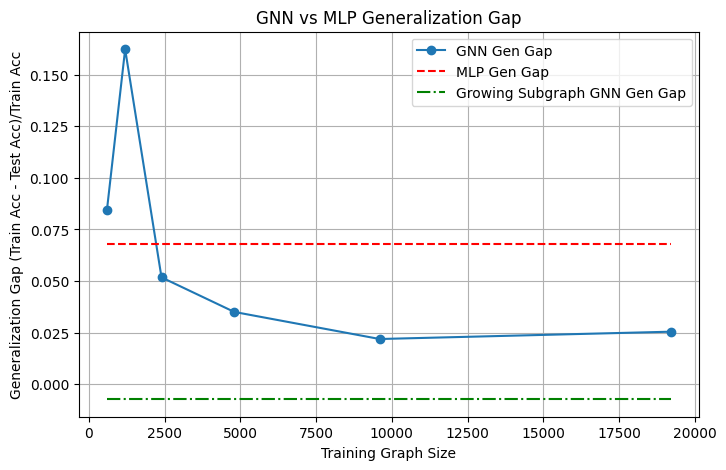}
        \caption{FER2013}
        \label{fig:gen_gap_fer2013}
    \end{subfigure}
    \caption{\small{Generalization gap 
    relative to training accuracy for (a) MNIST, (b) FMNIST, (c) CIFAR10, (d) FER2013. We compare an MLP trained on the VAE embeddings of the full dataset (red); GNNs \textbf{fully} trained on subgraphs of the full data graph with size given by the $x$-axis (blue, Thm.~\ref{thm:ga_bound_2}); and a GNN learned on this sequence of subgraphs, one per epoch (green, Cor.~\ref{thm:main}). The generalization gap decreases with graph size (blue), and is substantially smaller when training on growing subgraphs (green), in line with our theoretical predictions.}}
\label{fig:gen_gap_all}
\end{figure*}

\begin{table*}[t]
    \centering
    \caption{\small{Accuracy on the \textbf{full dataset/graph}. Our method outperforms compared methods on every dataset, achieving the highest test accuracy and smallest generalization gap.}}
    \begin{tabular}{lcccccccc}
        \toprule
        \multirow{2}{*}{Model} 
            & \multicolumn{2}{c}{MNIST} 
            & \multicolumn{2}{c}{FMNIST} 
            & \multicolumn{2}{c}{CIFAR10}
            & \multicolumn{2}{c}{FER2013} \\
        \cmidrule(lr){2-3} \cmidrule(lr){4-5} \cmidrule(lr){6-7} \cmidrule(lr){8-9}
            & Test & Train
            & Test & Train 
            & Test & Train 
            & Test  & Train  \\
        \midrule
        GCN (superpixel graph) \cite{dwivedi2023benchmarking} 
            & 90.12 & 96.46 
            & -- & -- 
            & 54.14 & 70.16 
            & -- & -- \\
        $k$NN 
            & 96.31 & 96.92 
            & 83.76 & 86.40 
            & 40.93 & 43.65 
            & 36.58 & 58.81 \\
        MLP 
            & 97.40 & 100.00 
            & 84.35 & 86.53
            & 54.29 & 66.49
            & 42.40 & 50.05 \\
        $\text{GNN}$ (ours) 
            & \textbf{100.00} & 100.00 
            & \textbf{84.46} & 85.28 
            & \textbf{61.83} & 63.18 
            & \textbf{48.38} & 47.97 \\
        \bottomrule
    \end{tabular}
    \label{tab:accuracy_all}
\end{table*}

\begin{figure*}[t]
    \centering
    \begin{subfigure}[t]{0.3\textwidth}
        \centering
        \includegraphics[width=\linewidth,height=4.3cm]{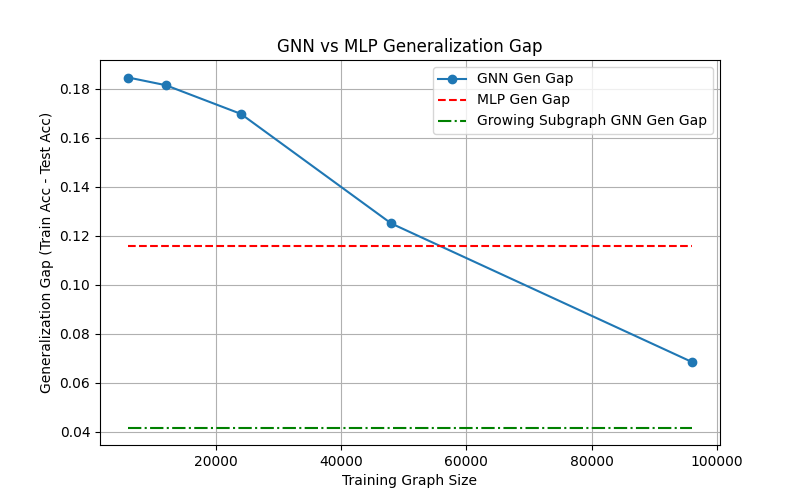}
        \caption{CelebA-Smiling}
        \label{fig:gen_gap_celeba_sb}
    \end{subfigure}
    \begin{subfigure}[t]{0.3\textwidth}
        \centering
        \includegraphics[width=\linewidth,height=4.3cm]{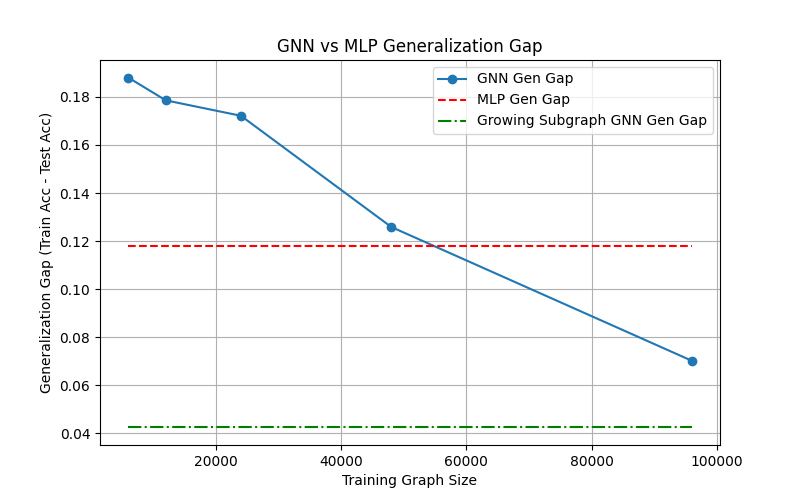}
        \caption{CelebA-Gender}
        \label{fig:gen_gap_celeba_gb}
    \end{subfigure}
    \begin{subfigure}[t]{0.3\textwidth}
        \centering
        \includegraphics[width=\linewidth,height=4.3cm]{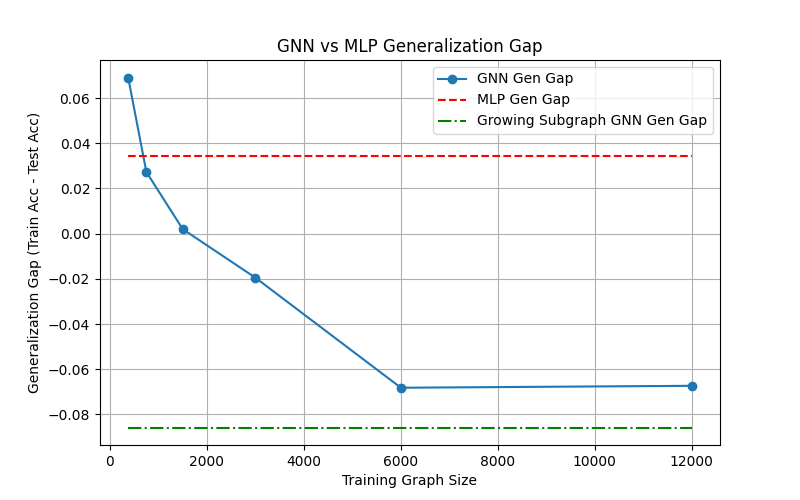}
        \caption{PathMNIST}
        \label{fig:gen_gap_pathmnist}
    \end{subfigure}
    \hfill
    \caption{\small{Generalization gap 
    relative to training accuracy for (a) CelebA-Smiling, (b) CelebA-Gender, (c) PathMNIST. We compare an MLP trained on the VAE embeddings of the full dataset (red); GNNs \textbf{fully} trained on subgraphs of the full data graph with size given by the $x$-axis (blue, Thm.~\ref{thm:ga_bound_2}); and a GNN learned on this sequence of subgraphs, one per epoch (green, Cor.~\ref{thm:main}). The generalization gap decreases with graph size (blue) and is substantially smaller when training on growing subgraphs (green), in line with our theoretical predictions.}}
\label{fig:gen_gap_add_exp}
\end{figure*}

\begin{table*}[t]
    \centering
    \caption{\small{Accuracy on the \textbf{full dataset/graph}. Our method outperforms compared methods on CelebA-Smiling, CelebA-Gender, and PathMNIST, achieving the highest test accuracy and smallest generalization gap. Given the size of the graphs for the first two datasets ($> 162k$ images/nodes), we didn't have time to finish assessing the training accuracy for the $k$NN model.}}
    \begin{tabular}{lcccccc}
        \toprule
        \multirow{2}{*}{Model} 
            & \multicolumn{2}{c}{CelebA-Smiling} & \multicolumn{2}{c}{CelebA-Gender}
            & \multicolumn{2}{c}{PathMNIST}\\
        \cmidrule(lr){2-3} \cmidrule(lr){4-5} \cmidrule(lr){6-7}
            & Test  & Train  
            & Test  & Train
            & Test  & Train\\
        \midrule
        GCN (superpixel graph) \cite{dwivedi2023benchmarking}  
            & -- & -- & -- & -- & -- & --\\
        $k$NN  
            & 70.15 &  (\textit{timeout}) & 79.09 & (\textit{timeout}) & 60.67 & 72.92\\
        MLP 
            & 81.33 & 93.92 & 81.38 & 93.05 & 66.16 & 70.16 \\
        $\text{GNN}$ (ours)  
            & \textbf{87.58} & 90.37 & \textbf{87.51} & 90.32 & \textbf{72.95} & 66.46 \\
        \bottomrule
    \end{tabular}
    \label{tab:accuracy_add_exp}
\end{table*}


\textbf{Experimental setup.}~We conduct experiments on MNIST, FMNIST, CIFAR10, FER2013, CelebA and PathMNIST benchmarks \cite{lecun1998mnist,xiao2017fashion,krizhevsky2009learning, goodfellow2013challenges, liu2015faceattributes, medmnistv1}. 
Since these are image datasets, we first have to define a way to extract meaningful graphs from this setting. A natural approach is to first construct embeddings that represent each image, usually of a lower dimension than the input (image) space, and take advantage of the geometry of such a lower-dimensional manifold with graphs. 

In this work, we make use of autoencoders to build representative embeddings. Since we want to preserve the images' translational invariances/equivariances, we 
set our encoder/decoder networks to be Convolutional Neural Networks (CNNs). In addition, to account for implicit invariances/equivariances in the data, which might not be captured by explicit symmetries incorporated in the model's architecture, we propose to use Variational Autoencoders (VAEs) \cite{kingma2014auto} to learn the latent space. Since VAEs learn a Gaussian approximation of the embeddings' distribution in the latent space, they add more structure to the low-dimensional manifold, which makes it smoother than deterministic AE counterparts, as seen in previous works \cite{pu2016variational,kusner2017grammar}. 

Given a set of images $\{\mathrm{X}_m\}_{m=1}^M$ from the ambient space $\ccalS_\mathrm{X}$, the encoder $f_{\text{enc}}\colon \ccalS_\mathrm{X} \to \reals^F$ reduces the data to a $F$-dimensional embedding $z_m = f_{\text{enc}}(\mathrm{X}_m)$, while the decoder $f_{\text{dec}}\colon \reals^F \to \ccalS_\mathrm{X}$ maps the embedding back to the original space $\hat{\mathrm{X}}_m = f_{\text{dec}}(z_m)$. In our setting, our embedding is defined as the posterior distribution's estimated mean.
For MNIST, CelebA, and PathMNIST, we found that the best latent space has size $F = 128$, for FMNIST $F = 256$, for CIFAR10 $F = 1024$, and FER2013 $F=64$.

Having access to the embeddings $z_m$, we can approximate the image manifold with a graph by computing the pairwise distance between image embeddings following the steps from Section \ref{sec:geom_graph_approx} (Eq.~\ref{eqn:geom_graph}), and then process this graph using a GNN to predict the image labels via semi-supervised node (image) classification. Concretely, given a dataset consisting of pairs $\{z_m, \mathrm{y}_m\}_{m = 1}^M$, where $\mathrm{y}_m \in \{1, \dots, C\}$ 
is the class label for image $m$, we construct a graph $G$ by considering the image embeddings ($z_m$) to be nodes and computing their pairwise edge weights with a Gaussian kernel. However, since computing pairwise distances between all embeddings in the dataset would be impractical, in practice, we use an approximate nearest neighbor (ANN) algorithm to construct a 100-nearest neighbor graph. Specifically, we apply a tree-based ANN method \cite{annoy_git} to find neighbors efficiently and then assign edges with Gaussian weights $\smash{w_{ij} = \exp(-\frac{|z_i - z_j|^2}{2\sigma^2})}$.


\textbf{Experimental results.}~We present our empirical results under three perspectives: (i) adherence to the theoretical results, (ii) effectiveness of our model, measured in terms of image classification accuracy on the test set of standard splits, and (iii) flexibility of our method. It is worth noting that all experiment details are provided in Appendix \ref{sec:exp_details}.

For (i), as shown in Figure~\ref{fig:gen_gap_all}, GNNs trained on fixed subgraphs (blue) exhibit large generalization gaps for small training graph sizes, but the gap decreases steadily with more nodes, eventually outperforming the MLP baseline. This behavior is consistent with the prediction of Theorem~\ref{thm:ga_bound_2}. GNNs trained on sequences of growing subgraphs (green) achieve the smallest generalization gap across all datasets, in agreement with Corollary~\ref{thm:main}, and consistently outperform both fixed-graph GNNs and MLPs.

For (ii), as shown in Table~\ref{tab:accuracy_all} our GNN achieves the highest test accuracy across all four datasets when trained on the full data graph. On MNIST, it reaches perfect accuracy, as expected given the simplicity of the task. On FMNIST and FER2013, our model outperforms all compared methods by a notable margin. While the MLP performs slightly better than $k$NN on CIFAR10, our GNN method surpasses both, achieving the best accuracy with substantially reduced overfitting as reflected by the smaller gap between train and test performance.




Finally, for (iii), we included two sets of experiments to showcase the flexibility of our framework to a broad range of image datasets, which also complements the support on both theory and practical effectiveness. In the first set, we applied it to two large-scale datasets outside standard benchmarks: (i) CelebA \cite{liu2015faceattributes}, a diverse dataset of celebrities' faces in different poses, backgrounds, and attributes, and (ii) PathMNIST \cite{medmnistv1}, a dataset for histopathology detection -- medical image analysis. It is worth noting that, since CelebA has multiple labels for each image, we selected two attributes, i.e., smiling and gender, and framed the tasks as separate binary classifications. The resulting datasets were coined CelebA-Smiling and CelebA-Gender.

The second set of experiments aimed to assess the superiority of the embeddings captured by our proposed CNNVAE model. To this end, we generated alternative embeddings using PCA across all datasets, and then trained and evaluated a GNN using these representations.

As seen in Figure \ref{fig:gen_gap_add_exp} and Table \ref{tab:accuracy_add_exp}, our method still outperforms the baselines throughout all the new datasets, though the graph is larger than the ones we previously tested (more than twice the size.) These results not only align with our theoretical claims -- i.e., strong performance and a shrinking generalization gap as graph size increases -- but also highlight the flexibility of our approach. Furthermore, when comparing GNNs trained on VAE versus PCA-based embeddings (Table \ref{tab:accuracy_vae_vs_pca}), our method maintained superior performance across all datasets. 

\begin{table*}[t]
    \centering
    \caption{\small{Accuracy on the \textbf{full dataset/graph}. VAE-based embeddings provided a more structured latent space, which translates to our method outperforming one that was trained on embeddings generated using PCA. Across all datasets -- MNIST, FMNIST, CIFAR10, FER2013, CelebA-Smiling, CelebA-Gender, and PathMNIST -- a GNN trained on the VAE embeddings achieved the highest test accuracy.}}
    \begin{tabular}{lcccccccccccccc}
        \toprule
        \multirow{2}{*}{Model} 
            & \multicolumn{2}{c}{MNIST}
            & \multicolumn{2}{c}{FMNIST}
            & \multicolumn{2}{c}{CIFAR10}
            & \multicolumn{2}{c}{FER2013}
            & \multicolumn{2}{c}{CelebA-Smiling} & \multicolumn{2}{c}{CelebA-Gender}
            & \multicolumn{2}{c}{PathMNIST}\\
        \cmidrule(lr){2-3} \cmidrule(lr){4-5} \cmidrule(lr){6-7} \cmidrule(lr){8-9} \cmidrule(lr){10-11} \cmidrule(lr){12-13}
        \cmidrule(lr){14-15}
            & Test  & Train  
            & Test  & Train
            & Test  & Train
            & Test  & Train
            & Test  & Train
            & Test  & Train
            & Test  & Train\\   
        \midrule
        $\text{GNN}$ (PCA)   
            & 55.10 & 53.36 & 66.23 & 66.14 & 27.25 & 26.82 & 27.40 & 26.94 & 60.72 & 61.99 & 74.01 & 73.52 & 72.24 & 65.20\\
        $\text{GNN}$ (VAE)  
            & \textbf{100.00} & 100.00 & \textbf{84.46} & 85.25 & \textbf{61.83} & 63.18 & \textbf{48.38} & 47.97 & \textbf{87.58} & 90.37 & \textbf{87.51} & 90.32 & \textbf{72.95} & 66.46\\
        \bottomrule
    \end{tabular}
    \label{tab:accuracy_vae_vs_pca}
\end{table*}


\section{Conclusions}
We introduced an image classification framework that constructs a geometric graph from VAE embeddings and leverages GNNs for semi-supervised learning. Grounded in the manifold hypothesis, this approach treats embeddings as signals over a geometric graph, providing a basis for analyzing GNN generalization.
We show that the generalization gap decreases with more sampled nodes, as also validated empirically. Our model also achieves competitive accuracy, outperforming others on all datasets.
Identified limitations are that our framework depends on the quality of VAE embeddings, which may not always capture meaningful manifold structure. The graph construction also relies on distance metrics and kernel parameters that can impact performance. Additionally, the two-stage pipeline introduces computational overhead that may limit scalability to larger datasets. 

\bibliographystyle{ieeetr}
\bibliography{myIEEEabrv,refs,bib_cumulative}

\appendices


\begin{table*}[t]
    \centering
    \caption{\small{VAE training hyperparameters for each dataset.}}
    \begin{tabular}{lcccccc}
    \toprule
    \textbf{Parameter} & \textbf{MNIST} & \textbf{FMNIST} & \textbf{CIFAR10} & \textbf{FER2013} & \textbf{CelebA} & \textbf{PathMNIST} \\
    \midrule
    Batch size         & 64    & 64    & 64   & 64   & 64 & 64\\
    Learning rate      & 0.0001 & 0.0001 & 0.0001 & 0.0003 & 0.0003 & 0.0001\\
    Number of epochs   & 50 & 50 & 50 & 50 & 50 & 50 \\
    Latent size & 128 & 256 & 1024 & 64 & 128 & 128\\
    Num. of layers & [3, 3, 1] & [3, 3, 1] & [4, 5, 2] & [3, 3, 3] & [3, 3, 3] & [3, 3, 1] \\
    \bottomrule
    \end{tabular}
    \label{tab:vae_hyperparams}
\end{table*}


\begin{table*}[t]
    \centering
    \caption{\small{GNN training hyperparameters for each dataset.}}
    \begin{tabular}{lcccccc}
    \toprule
    \textbf{Parameter} & \textbf{MNIST} & \textbf{FMNIST} & \textbf{CIFAR10} & \textbf{FER2013} & \textbf{CelebA} & \textbf{PathMNIST} \\
    \midrule
    Batch size   & 256   & 256   & 256   & 256   & 256 & 256\\
    Learning rate   & 0.01     & 0.01     & 0.01     & 0.01 & 0.01 & 0.01\\
    Kernel width   & 4.0   & 0.8   & 5.0   & 4.0   & 3.5 & 5.0\\
    Hidden dimension & 128 & 128 & 128 & 128 & 128 & 128\\
    Num. of layers & 1 & 1 & 1 & 1 & 1 & 1\\
    \bottomrule
    \end{tabular}
    \label{tab:gnn_hyperparams}
\end{table*}

\section{Experiments Details}\label{sec:exp_details}
Experiments were conducted with two different settings depending on the memory complexity related to the size of the data manifold and its dimension. Specifically, for smaller graphs, i.e., MNIST, FMNIST, and FER2013 datasets, we used a server with 2x NVIDIA GeForce RTX 4090 (24GB) GPU, 128GB of RAM, and a CPU 
AMD Ryzen Threadripper PRO 5955WX 16-Cores. For medium-to-large ones, i.e., CIFAR10, CelebA \cite{liu2015faceattributes}, and PathMNIST \cite{medmnistv1}  we experimented using a server with 2x NVIDIA RTX 6000 Ada Generation (48GB) GPU, 500GB of RAM, and a CPU AMD EPYC 7453 28-Core Processor. Both servers used Ubuntu 22.04.4 LTS as a Linux distro.

We used the original split for each one of the datasets. For each experiment, which is directly related to the number of sampled nodes, we performed 4 runs and presented the mean.
It's worth noting that, to make the comparisons fairer, especially with the SLIC-based GNN (\cite{dwivedi2023benchmarking}), we trained our models under a fixed computational budget of less than $100k$ parameters. 

The model used is a 1-layer GNN with SAGEConv \cite{hamilton2017inductive} for the generalization gap analysis presented in Figures \ref{fig:gen_gap_mnist}-\ref{fig:gen_gap_fer2013} and \ref{fig:gen_gap_celeba_sb}-\ref{fig:gen_gap_pathmnist}, and the results showed in Table \ref{tab:accuracy_all}, \ref{tab:accuracy_add_exp} and \ref{tab:accuracy_vae_vs_pca}. We used PyTorch \cite{paszke2019pytorch} and, more specifically, PyTorch Geometric (PyG) framework \cite{fey2019fast} for the models.

To obtain the best embedding representation, we use Weights \& Biases (W\&B) to fine-tune the VAE’s hyperparameters. We optimize the number of layers in the CNN encoder/decoder and the latent space dimension. The CNN has three convolutional layer blocks, each with 1–5 layers, ensuring encoder-decoder symmetry. The latent dimension is chosen from {32, 64, 128, 256, 364, 512, 1024}. A grid search sweeps through the cartesian product of these configurations. The best parameters vary by dataset: MNIST, FMNIST and PathMNIST perform best with [3, 3, 1] convolutional blocks and latent dimensions of 128, 256, and 128, respectively. CIFAR10 requires [4, 5, 2] blocks and a 1024-dimensional latent space, and FER2013 and CelebA \cite{liu2015faceattributes} need [3, 3, 3] blocks, 64 and 128 latent sizes, respectively. We set the KL divergence weight
to balance regularization and reconstruction.

Tables \ref{tab:vae_hyperparams} and \ref{tab:gnn_hyperparams} summarize the hyperparameters used in our experiments for all datasets. We used Adam \nocite{kingma17-adam} optimizer with the dataset's respective parameters.

We expect to release the code of our project in the near future.


\section{Auxiliary Results for Theorem \ref{thm:ga_bound_1}}\label{thm_pf:ga_bound_1}

\subsection{Assumptions}

First, let us state a few assumptions that will be used in the following proofs.
\begin{assumption}
The convolutional maps in $\Phi_\ccalW$ are locally Lipschitz on $\ccalM$ and have norm at most $1$.
\end{assumption}

\begin{assumption}
The nonlinear function $\rho$ and its first-order derivative $\rho'$ have Lipschitz constant $1$ and $\rho(0)=0$, i.e., the function is normalized Lipschitz-continuous.
\end{assumption}

\subsection{Lemmas}

Furthermore, we need the following lemma adapted from \cite{wang2024gen_gnn}.

\begin{lemma}\label{lemma:entropy_bound}

Let $\ccalM \subset \reals^F$ be a manifold equipped with a Laplace-Beltrami (LB) operator $\ccalL$, as defined in \eqref{eqn:lb_operator}, a self-adjoint operator, whose eigenpairs are $\{\lambda_i, \phi_i\}_{i=1}^\infty$. Moreover, let $f, g \in L^2(\ccalM)$ be manifold signals over $\ccalM$, and $\ccalP_n$ the sampling operator used to sample manifold signals. Therefore, we have that:
\begin{equation}\label{eqn:graph_manfld_signal_conv}
    |\|\ccalP_nf\| - \|f\|_{\ccalM}| = \ccalO\bigg(\sqrt[4]{\dfrac{\log(1/\delta)}{n}}\bigg).
\end{equation}

\end{lemma}

\begin{proof}
    
The inner product between these signals is defined as
\begin{equation}
    \langle f, g\rangle_\ccalM = \int_\ccalM f(x)g(x)d\mu(x), 
\end{equation}
where $d\mu(x)$ is the volume element of $\ccalM$ w.r.t. its measure $\mu$. Hence, one can define the norm of such a signal as
\begin{equation}
    \|f\|^2_\ccalM = \langle f, f\rangle_\ccalM.
\end{equation}

Given that we have $\{X_1, \dots, X_N\}$ randomly sampled points from $\ccalM$, by Theorem 19 in \cite{von2008consistency} we have that
\begin{equation}
    |\langle \ccalP_Nf, \phi_i \rangle - \langle f, \phi_i \rangle_\ccalM| = \ccalO\bigg(\sqrt{\dfrac{\log(1/\delta)}{N}}\bigg).
\end{equation}

The above implies that
\begin{equation}
    |\|\ccalP_nf\|^2 - \|f\|_{\ccalM}^2| = \ccalO\bigg(\sqrt{\dfrac{\log(1/\delta)}{n}}\bigg),
\end{equation}
which further implies that
\begin{equation}\label{eqn:graph_manfld_signal_conv_2}
    |\|\ccalP_nf\| - \|f\|_{\ccalM}| \approx \ccalO\bigg(\sqrt[4]{\dfrac{\log(1/\delta)}{n}}\bigg).
\end{equation}

\end{proof}

\begin{lemma}\label{lem:ga_bound_1}
    Suppose Assumptions \ref{ass:lipschitz_filter_main_body}--\ref{ass:lipschitz_act_fn_main_body} hold. With probability at least $1-\delta$, for any GNN $\Phi_\ccalW$ as in \textbf{Setup}, we have
    \begin{align}
        |\tilde{l}(\ccalY,\Phi_\ccalW(\ccalX,\ccalL))-\tilde{l}(M_\ccalT Y_n, M_\ccalT \Phi_\ccalW(X_n,L_n))|\nonumber\\
        = \ccalO \bigg(\frac{1}{i_c} + \sqrt[d+4]{\frac{\log{1/\delta}}{n}} \bigg)
    \end{align}
where $M_\ccalT$ is the training mask [cf. \eqref{eqn:erm_ss_learning}]. 
\end{lemma}

\begin{proof}\label{lem_pf:ga_bound_1}
    We first write the difference between the loss function of the GNN and the MNN trained on the same set of parameters for the semi-supervised setting:
    \begin{align}
        &\left|\tilde{l}(M_\ccalT Y_n, M_\ccalT \Phi_\ccalW(X_n,L_n)) - \tilde{l}(\ccalY,\Phi_\ccalW(\ccalX,\ccalL))\right| \nonumber\\
        &=\dfrac{1}{p}\left|\|M_\ccalT\Phi_\ccalW(X_n, L_n) - M_\ccalT Y_n\|_2 - \|\Phi_\ccalW(\ccalX,\ccalL) - \ccalY\|_{\ccalM}\right| \nonumber\\
        &= \dfrac{1}{p}|\|M_\ccalT\Phi_\ccalW(X_n, L_n) - M_\ccalT Y_n \nonumber \\
        & \qquad + (M_\ccalT\ccalP_N\Phi_\ccalW(\ccalX,\ccalL) - M_\ccalT\ccalP_N\Phi_\ccalW(\ccalX,\ccalL))\|_2 \nonumber \\
        & \qquad - \|\Phi_\ccalW(\ccalX,\ccalL) - \ccalY\|_{\ccalM}| \nonumber\\
        &\leq \dfrac{1}{p}|\|M_\ccalT\Phi_\ccalW(X_n, L_n) - M_\ccalT\ccalP_N\Phi_\ccalW(\ccalX,\ccalL)\|_2  \label{eq:ga_bound_1_1}\\
        & \qquad + \|M_\ccalT\ccalP_N\Phi_\ccalW(\ccalX,\ccalL) - M_\ccalT\ccalP_N\ccalY\|_2 \nonumber \\
        & \qquad - \|\Phi_\ccalW(\ccalX,\ccalL) - \ccalY\|_{\ccalM}|. \nonumber
    \end{align}
    In (\ref{eq:ga_bound_1_1}) we used the fact that $Y_n = \ccalP_N\ccalY$. Now, since the training mask has unitary norm, i.e., $\|M_\ccalT\| = 1$  we have that:

    \begin{align}
        &\dfrac{1}{p}|\|M_\ccalT\Phi_\ccalW(X_n, L_n) - M_\ccalT\ccalP_N\Phi_\ccalW(\ccalX,\ccalL)\|_2  \nonumber\\
        & \qquad + \|M_\ccalT\ccalP_N\Phi_\ccalW(\ccalX,\ccalL) - M_\ccalT\ccalP_N\ccalY\|_2 \nonumber \nonumber\\
        & \qquad - \|\Phi_\ccalW(\ccalX,\ccalL) - \ccalY\|_{\ccalM}| \nonumber\\
        &\leq \underbrace{\dfrac{1}{p}|\|M_\ccalT\Phi_\ccalW(X_n, L_n) - M_\ccalT\ccalP_N\Phi_\ccalW(\ccalX,\ccalL)\|_2|}_{\circled{1}} \nonumber \\
        &\qquad+ \dfrac{1}{p}|\|\ccalP_N\Phi_\ccalW(\ccalX,\ccalL) - \ccalP_N\ccalY\|_2 - \|\Phi_\ccalW(\ccalX,\ccalL) - \ccalY\|_{\ccalM}|.\label{eq:ga_bound_1_2}
    \end{align}
    By lemma \ref{lemma:entropy_bound}, the second term on (\ref{eq:ga_bound_1_2}) has order $\ccalO((\log(1/\delta)/N)^{1/4}$. Therefore, our proof boils down to finding an upper bound to the term $\circled{1}$ above:
    \begin{align}
        &\dfrac{1}{p}|\|M_\ccalT\Phi_\ccalW(X_n, L_n) - M_\ccalT\ccalP_N\Phi_\ccalW(\ccalX,\ccalL)\|_2| \nonumber \\
        &= \dfrac{1}{p}\left[\sum_{i\in \ccalT} (\Phi_\ccalW(X_n, L_n))_i - \Phi_\ccalW(\ccalX,\ccalL)(x_i))^2\right]^{1/2} \nonumber\\
        &\leq \dfrac{1}{p}\sum_{i\in \ccalT} |(\Phi_\ccalW(X_n, L_n))_i - \Phi_\ccalW(\ccalX,\ccalL)(x_i)| \label{eq:ga_bound_1_3}\\
        &\leq \dfrac{1}{p}\cdot p \cdot|\rho((h(L_n)X_n)_i) - \rho(h(\ccalL)\ccalX(x_i))|, \label{eq:ga_bound_1_4}
    \end{align}
    where in the first inequality (\ref{eq:ga_bound_1_3}) we used the fact that, for $v \in \reals^F$, $\sum_i |v_i| \geq (\sum_i {v_i}^2)^{1/2}$, whilst in the second (\ref{eq:ga_bound_1_4}), we take the largest absolute difference between the GNN and MNN. Finally, given that the nonlinear functions $\rho$ are normalized Lipschitz continuous, we have the following bound:
    \begin{align}       
        & \dfrac{1}{p}\cdot p \cdot|\rho((h(L_n)X_n)_i) - \rho(h(\ccalL)\ccalX(x_i))| \nonumber\\
        &\leq |(h(L_n)X_n)_i - h(\ccalL)\ccalX(x_i)| \nonumber \\
        &= |(h(L_n)\ccalP_n\ccalX)_i - (\ccalP_nh(\ccalL)\ccalX)_i| \nonumber\\
        &= |[h(L_n)\ccalP_n\ccalX - \ccalP_nh(\ccalL)\ccalX]_i| \nonumber\\
        &\leq \|h(L_n)\ccalP_n\ccalX - \ccalP_nh(\ccalL)\ccalX\|_2 \nonumber\\        
        &\leq
        (C_1 + C_2)\left(\dfrac{\log(\tfrac{C_1}{\delta})}{p}\right)^{\tfrac{1}{d+4}} + 
        C_3\sqrt{\dfrac{\log(\tfrac{1}{\delta})}{p}} + \dfrac{C_4}{i_c}, \label{eq:ga_bound_1_5}
    \end{align}
    $C_1 = C_{\ccalM, 1}\tfrac{\pi^2}{6}\|\ccalX\|_\ccalM, C_2 = C_{\ccalM, 2}\tfrac{\pi^2}{6}, C_3 = \tfrac{\pi^2}{6}, C_4 = \|\ccalX\|_{\ccalM}$, where $C_{\ccalM, 1}$ and $C_{\ccalM, 2}$ are constants that depend on the dimension $d$ and the volume of the manifold.

    The last step in (\ref{eq:ga_bound_1_5}) is an adaption of the argument used in \cite{wang2024gen_gnn_asil} to prove the bound for the difference between the graph and manifold filters (Equation (51), \cite{wang2024gen_gnn_asil}).
\end{proof}


\section{Auxiliary Results for Theorem \ref{thm:MNN_learning}} \label{app:pfs_juan}

\subsection{Assumptions}

We start by stating necessary assumptions.



\begin{assumption} \label{ass:filters}
The convolutional maps in $\Phi_\ccalW$ are locally Lipschitz on $\ccalM$ and have norm at most $1$.
\end{assumption}

\begin{assumption} \label{ass:nls}
The nonlinear function $\rho$ and its first-order derivative $\rho'$ have Lipschitz constant $1$. Also, $\rho(0)=0$.
\end{assumption}

\begin{assumption} \label{ass:bl}
The convolutions in all layers of $\Phi_\ccalW$ are low-pass filters with bandwidth $c$. I.e., if $\ccalY$ is the output of a convolution, $\langle \ccalY, \phi_i\rangle = 0$ for $\lambda_i > c$, and $i_c = \argmin_i (\lambda_i - c)\boldsymbol{1}(\lambda_i  \geq c)$.
\end{assumption}

\begin{assumption} \label{ass:smpl_op_unit_norm}
    The sampling operator $\ccalP_n$ has unitary norm.
\end{assumption}

\begin{assumption} \label{ass:loss}
Let $\tbl \in \reals^{n}$ be such that $[\tbl]_{i} = n^{-1}\tilde{l}([Y]_i,[Y']_i)$ where $\tilde{l}$ is a standard loss function with Lipschitz constant $1$. The semi-supervised loss function $l$ is defined as $l(Y,Y')= n |\ccalT|^{-1} (M_\ccalT\tbl)^T \boldsymbol{1}$ where $M_\ccalT \in \{0,1\}^{|\ccalT|\times n}$ is the training mask. Since $\sigma_{\mbox{\tiny max}}(M_\ccalT) = 1$, $l$ has Lipschitz constant $n/|\ccalT|$, which is equal to $\nu^{-1}$ when $|\ccalT|=\nu n$. 
\end{assumption}

\begin{assumption} \label{ass:mnn-lipschitz-params}
	The MNN $\Phi_\ccalW(\ccalX,\ccalL)$ is $\alpha$-Lipschitz, and its gradient $ \nabla_\ccalW\Phi_\ccalW(\ccalX,\ccalL)$ is $\beta$-Lipschitz, with respect to the parameters $\ccalW$.
\end{assumption}

\subsection{Lemmas}

The following are adapted lemmas from \cite{cervino2021increase} used to prove Theorem~\ref{thm:MNN_learning}.

\begin{lemma}\label{lemma:norm_bound_dif_grad_MNN}
	Let $\Phi_\ccalW(\ccalX,\ccalL)$ be an MNN with $F_\ell=F$ for $1\leq \ell \leq \mathscr{L}-1$ and $F_\mathscr{L} = 1$. Let $\Phi(X_n,L_n)$ be a GNN with same weights $\ccalW$ on a geometric graph $G_n$ sampled uniformly from $\ccalM$ as in \eqref{eqn:geom_graph}. Under Assumptions \ref{ass:filters}-\ref{ass:smpl_op_unit_norm}, with probability $1-\delta$ it holds that
	\begin{align}
	&\|\ccalP_n\nabla_{\ccalW}\Phi_\ccalW(\ccalX,\ccalL)-\nabla_{\ccalW}\Phi_\ccalW(X_n,L_n)\| \nonumber\\
	&\leq  2 \sqrt{(\mathscr{L}-1)KF^2 + KF} \mathscr{L}^3F^{3\mathscr{L}-3} \bigg( C_1' \varepsilon + C_2'\sqrt{\frac{\log{1/\delta}}{n}} \bigg) \nonumber \\
    &\leq 2\sqrt{2(\mathscr{L}-1)K} \mathscr{L}^3F^{3\mathscr{L}-2} \bigg( C_1' \varepsilon + C_2'\sqrt{\frac{\log{1/\delta}}{n}} \bigg)\text{.}
	\end{align}
\end{lemma}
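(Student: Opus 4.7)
The plan is to expand both $\nabla_\ccalW \Phi_\ccalW(\ccalX,\ccalL)$ and $\nabla_\ccalW \Phi_\ccalW(X_n,L_n)$ by the chain rule and bound the difference coordinate-by-coordinate over the weights $W_{\ell k}$. For each such weight, the partial derivative decomposes as a product of three ingredients: (i) the forward embedding at layer $\ell-1$ (evaluated on the manifold, resp.\ on the graph), (ii) the $k$-th filter output $e^{-k\ccalL}\ccalX_{\ell-1}$, resp.\ $L_n^k X_{n,\ell-1}$, and (iii) the backpropagated derivative through layers $\ell+1,\ldots,\mathscr{L}$, itself an alternating product of diagonal $\rho'$-matrices, filters, and weights. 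The target Frobenius norm then splits additively into the $(\mathscr{L}-1)KF^2 + KF$ components corresponding to the scalar entries of the $W_{\ell k}$'s, producing the leading $\sqrt{(\mathscr{L}-1)KF^2 + KF}$ prefactor.

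The bound on each component is assembled via the telescoping identity $\prod_i a_i - \prod_i b_i = \sum_i (\prod_{j<i} a_j)(a_i - b_i)(\prod_{j>i} b_j)$ combined with the triangle inequality. The differences $(a_i - b_i)$ fall into three families, each of which I would control by induction on layer depth: forward-pass differences $\|\ccalP_n \ccalX_\ell - X_{n,\ell}\|$, filter-output differences $\|\ccalP_n e^{-k\ccalL}\ccalX_\ell - L_n^k X_{n,\ell}\|$, and differences of $\rho'$-diagonal matrices, the last of which I would reduce to forward-pass differences through Lipschitzness of $\rho'$ (Assumption \ref{ass:nls}). The first two families are the source of the $C_1'\varepsilon$ term (arising from the low-pass truncation in Assumption \ref{ass:bl}, i.e., the $1/i_c$ regime) and the $C_2'\sqrt{\log(1/\delta)/n}$ term (uniform-sampling concentration), via exactly the argument that underlies Proposition \ref{prop:transf} and the filter-convergence bound cited in the proof of Lemma \ref{lem:ga_bound_1}. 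The factors left untouched by telescoping are bounded uniformly using Assumptions \ref{ass:filters}--\ref{ass:smpl_op_unit_norm}: filters have operator norm at most $1$, weights contribute at most a factor of $F$ per layer through Frobenius-to-operator-norm together with the sum over $k$, and $\rho$, $\rho'$ are $1$-Lipschitz with $\rho(0)=0$.

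Propagating these bounds through the full depth explains the stated constants. Each layer contributes up to one factor of $F$ for its forward embedding, one for its filter output, and one for its backward chain, accounting for the $F^{3\mathscr{L}-3}$ factor. The $\mathscr{L}^3$ factor tracks three separate accumulations: the inductive sum over layers in the forward pass, the length of the backward chain, and the outer sum over the $\mathscr{L}$ layers at which differentiation may occur. The second, looser form of the bound is then immediate by absorbing $\sqrt{F^2}$ from the prefactor into the exponent to obtain $\sqrt{2(\mathscr{L}-1)K}\,F^{3\mathscr{L}-2}$.

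The hard part will be the backward pass: the product of $\rho'$-diagonal matrices, filters, and weights across $\mathscr{L}-\ell$ layers threatens errors that compound multiplicatively rather than additively. Keeping the bound additive requires the telescoping to be applied uniformly with tight operator-norm control on the unperturbed factors, and the $\rho'$-difference step forces the induction to be ordered so that forward-pass convergence is established strictly before the backward chain is expanded at each depth. A related subtlety is that the concentration event underlying both $\|\ccalP_n \ccalX_\ell - X_{n,\ell}\|$ and the filter-difference bound must be held simultaneously for all $\ell$ and $k$, which a union bound handles at the cost of absorbing a $\log(\mathscr{L}K)$ into the constants. Once this scaffolding is in place, the remaining bookkeeping is mechanical and reduces to assembling the prefactors as described above.
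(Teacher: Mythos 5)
Your proposal follows essentially the same route as the paper's proof: differentiate with respect to a single scalar weight entry, peel the network apart layer by layer via a telescoping/triangle-inequality argument, reduce the $\rho'$-differences to forward-pass differences through the Lipschitz assumption on $\rho'$, invoke Proposition \ref{prop:transf} and the filter-convergence/concentration bounds for the $\varepsilon$ and $\sqrt{\log(1/\delta)/n}$ terms, and recover the prefactor from the $(\mathscr{L}-1)KF^2+KF$ parameter count. Your remark about needing a union bound to hold all concentration events simultaneously is a point the paper's proof glosses over, so if anything your sketch is slightly more careful on that front.
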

\begin{proof}
	We will first show that the gradient with respect to any arbitrary element $[W_{\ell k}]_{fg} \in \reals$ of $\ccalW$ can be uniformly bounded. Note that the maximum is attained if $\ell=\ell^\dagger=1$. Without loss of generality, assuming $\ell^\dagger>\ell-1$ and $\omega = [W_{\ell^\dagger k}]_{fg} \in \reals$, we can begin by using the output of the MNN to write
	\begin{align} 	\label{eq:norm_bound_dif__grad_WNN_first_equality}
	\|\ccalP_n\nabla_{\omega}&\Phi(\ccalX,\ccalL)-\nabla_{\omega}\Phi(X_n, L_n)\| \nonumber\\
    &\leq \|\nabla_{\omega}\ccalP_n \Phi(\ccalX,\ccalL)- \nabla_{\omega}\Phi(X_n,L_n)\| \nonumber \\
    &=\|\nabla_{\omega}[\ccalP_n\ccalX_{\mathscr{L}}]_f-\nabla_{\omega}[X_{n\mathscr{L}}]_f\| \nonumber\\
	&=\bigg\|\nabla_\omega \rho\bigg(\sum_{g=1}^{F_{\mathscr{L}-1}} \sum_{k=0}^{K-1} \ccalP_n e^{-k \ccalL} [\ccalX_{\mathscr{L}-1}]_g [W_{\mathscr{L} k}]_{fg} \bigg) \nonumber\\
    &\hspace{1.5em}-\nabla_\omega \rho \bigg(\sum_{g=1}^{F_{\mathscr{L}-1}} \sum_{k=0}^{K-1} L_n^{k} [X_{n \mathscr{L}-1}]_g[W_{\mathscr{L} k}]_{fg} \bigg)\bigg\|
	\end{align}
    where we have dropped the subscript $\ccalW$ from $\Phi$ for simplicity.
	
	Applying the chain rule and using the triangle inequality, we get
	\begin{align}
	&\|\nabla_{\omega}[\ccalP_n\ccalX_{\mathscr{L}}]_f-\nabla_{\omega}[X_{n\mathscr{L}}]_f\| \nonumber\\
    &\leq \bigg\|\bigg(\nabla \rho \bigg(\sum_{g=1}^{F_{\mathscr{L}-1}} \sum_{k=0}^{K-1} \ccalP_n e^{-k \ccalL} [\ccalX_{\mathscr{L}-1}]_g[W_{\mathscr{L} k}]_{fg}\bigg) \nonumber\\
    &\quad\qquad-\nabla \rho \bigg(\sum_{g=1}^{F_{\mathscr{L}-1}} \sum_{k=0}^{K-1}  L_n^k [X_{n \mathscr{L}-1}]_g[W_{\mathscr{L} k}]_{fg}\bigg)\bigg)  \nonumber\\
    &\qquad \times \ccalP_n\nabla_{\omega}\bigg(\sum_{g=1}^{F_{\mathscr{L}-1}} \sum_{k=0}^{K-1}  e^{-k \ccalL} [\ccalX_{\mathscr{L}-1}]_g[W_{\mathscr{L} k}]_{fg}\bigg)\bigg\| \nonumber\\	
    &\quad+\bigg\|\nabla \rho \bigg(\sum_{g=1}^{F_{\mathscr{L}-1}} \sum_{k=0}^{K-1} L_n^k [X_{n \mathscr{L}-1}]_g[W_{\mathscr{L} k}]_{fg}\bigg)  \nonumber\\
    &\qquad \times \bigg({\nabla_\omega}  \sum_{g=1}^{F_{\mathscr{L}-1}} \sum_{k=0}^{K-1} \ccalP_n e^{-k\ccalL} [\ccalX_{\mathscr{L}-1}]_g[W_{\mathscr{L} k}]_{fg} \nonumber\\
    &\qquad\quad-{\nabla_\omega} \sum_{g=1}^{F_{\mathscr{L}-1}} \sum_{k=0}^{K-1}  L_n^k [X_{n \mathscr{L}-1}]_g[W_{\mathscr{L} k}]_{fg}\bigg)\bigg\|.
	\end{align}
	
	Next, we use Cauchy-Schwarz inequality, Assumptions \ref{ass:nls} and \ref{ass:smpl_op_unit_norm}, and \cite{cervino2021increase}[Lemma 2, adapted to MNNs] to bound the terms corresponding to the gradient of the nonlinearity $\rho$ and the norm of the MNN respectively. Explicitly,
	\begin{align} \label{eqn:prop_norm_bound_dif__grad_WNN_divided_eqns} 
	&\|\nabla_{\omega}[\ccalP_n \ccalX_{\mathscr{L}}]_f-\nabla_{\omega} [X_{n\mathscr{L}}]_f\| \nonumber\\
    &\leq \bigg\|\sum_{g=1}^{F_{\mathscr{L}-1}} \sum_{k=0}^{K-1} \ccalP_n e^{-k\ccalL} [\ccalX_{\mathscr{L}-1}]_g[W_{\mathscr{L} k}]_{fg} \nonumber\\
    &\qquad-\sum_{g=1}^{F_{\mathscr{L}-1}} \sum_{k=0}^{K-1} L_n^k [X_{n \mathscr{L}-1}]_g[W_{\mathscr{L} k}]_{fg}\bigg\|F^{\mathscr{L}-1}\|\ccalX\| \nonumber\\
	&\quad+\bigg\|\sum_{g=1}^{F_{\mathscr{L}-1}} \nabla_\omega \sum_{k=0}^{K-1} \ccalP_n \bigg(e^{-k \ccalL} [\ccalX_{ \mathscr{L}-1}]_g[W_{\mathscr{L} k}]_{fg} \nonumber\\
    &\qquad- L_n^k [X_{n \mathscr{L}-1}]_g [W_{\mathscr{L} k}]_{fg})\bigg)\bigg\|
	\end{align}
    
    Applying the triangle inequality to the second term, we get
	\begin{align} \label{eqn:norm_bound_dif_grad_WNN_long_triangle_inequality} 
	&\|\nabla_{\omega}[\ccalP_n \ccalX_{\mathscr{L}}]_f - \nabla_{\omega}[X_{n\mathscr{L}}]_f\| \nonumber\\
	&\leq \bigg\|\sum_{g=1}^{F_{\mathscr{L}-1}} \sum_{k=0}^{K-1} \ccalP_n e^{-k \ccalL} [\ccalX_{\mathscr{L}-1}]_g [W_{\mathscr{L} k}]_{fg} \nonumber\\
    &\qquad-\sum_{g=1}^{F_{\mathscr{L}-1}} \sum_{k=0}^{K-1} L_n^k [X_{n\mathscr{L}-1}]_g[W_{\mathscr{L} k}]_{fg}\bigg\| F^{\mathscr{L}-1}\|\ccalX\| \nonumber\\
	&\quad+\bigg\|\sum_{g=1}^{F_{\mathscr{L}-1}} \nabla_\omega \sum_{k=0}^{K-1}  \bigg(\ccalP_n e^{-k \ccalL} [W_{\mathscr{L} k}]_{fg} \nonumber\\
    &\qquad-L_n^k \ccalP_n [W_{\mathscr{L} k}]_{fg}\bigg)[ \ccalX_{\mathscr{L}-1}]_g \bigg\| \nonumber\\
	&\quad+\sum_{g=1}^{F_{\mathscr{L}-1}}\bigg\| \nabla_\omega \sum_{k=0}^{K-1}  L_n^k  \bigg([\ccalP_n \ccalX_{ \mathscr{L}-1}]_g- [X_{n \mathscr{L}-1}]_g\bigg)[W_{\mathscr{L} k}]_{fg})\bigg\| .
	\end{align}
    
	Now note that as we consider the case in which $\ell_\dagger < \mathscr{L}-1$, using the Cauchy-Schwarz inequality we can use the same bound for the first and second terms on the right hand side of \eqref{eqn:norm_bound_dif_grad_WNN_long_triangle_inequality}. Also note that, by Assumption \ref{ass:filters}, the filters are non-expansive, which allows us to write
	\begin{align}
	&\|\nabla_{\omega}[\ccalP_n \ccalX_{\mathscr{L}}]_f-\nabla_{\omega}[X_{n\mathscr{L}}]_f\| \nonumber\\
    &\leq \bigg\|\sum_{g=1}^{F_{\mathscr{L}-1}} \sum_{k=0}^{K-1} \ccalP_n e^{-k \ccalL} [\ccalX_{\mathscr{L}-1}]_g [W_{\mathscr{L} k}]_{fg} \nonumber\\
    &\qquad-\sum_{g=1}^{F_{\mathscr{L}-1}} \sum_{k=0}^{K-1} L_n^k [X_{n\mathscr{L}-1}]_g[W_{\mathscr{L} k}]_{fg}\bigg\| F^{\mathscr{L}-1}\|\ccalX\| \nonumber\\	
    &\quad+\bigg\|\sum_{g=1}^{F_{\mathscr{L}-1}} \sum_{k=0}^{K-1}  \ccalP_n e^{-k \ccalL} [W_{\mathscr{L} k}]_{fg}	
	-L_n^k \ccalP_n [W_{\mathscr{L} k}]_{fg}\bigg\| F^{\mathscr{L}-1}\|\ccalX\| \nonumber\\
    &\quad+\sum_{g=1}^{F_{\mathscr{L}-1}}\bigg\| \nabla_{\omega}   \bigg([\ccalP_n \ccalX_{\mathscr{L}-1}]_g- [X_{n \mathscr{L}-1}]_g\bigg)\bigg\|.
	\end{align}
	The only term that remains to bound has the exact same bound derived in \eqref{eq:norm_bound_dif__grad_WNN_first_equality}, but on the previous layer $\mathscr{L}-2$. Hence, by applying the same steps $\mathscr{L}-2$ times, 
	we can obtain a bound for any element $\omega$ of tensor $\ccalH$. 
    \begin{align} \label{eqn:solved_recursion_in_lemma_2}
	&\|\nabla_{\omega}[\ccalP_n \ccalX_{\mathscr{L}}]_f-\nabla_{\omega}[X_{n\mathscr{L}}]_f\| \nonumber\\
    &\leq \mathscr{L} F^{\mathscr{L}-2} \bigg\|\sum_{g=1}^{F_{\mathscr{L}-1}} \sum_{k=0}^{K-1} \ccalP_n e^{-k \ccalL} [\ccalX_{\mathscr{L}-1}]_g [W_{\mathscr{L} k}]_{fg} \nonumber\\
    &\qquad\qquad\qquad-\sum_{g=1}^{F_{\mathscr{L}-1}} \sum_{k=0}^{K-1} L_n^k [X_{n\mathscr{L}-1}]_g[W_{\mathscr{L} k}]_{fg}\bigg\| F^{\mathscr{L}-1}\|\ccalX\| \nonumber\\	
    &\quad+\mathscr{L} F^{\mathscr{L}-2}\bigg\|\sum_{g=1}^{F_{\mathscr{L}-1}} \sum_{k=0}^{K-1}  \ccalP_n e^{-k \ccalL} [W_{\mathscr{L} k}]_{fg} \nonumber\\
    &\qquad\qquad\qquad-
    L_n^k \ccalP_n [W_{\mathscr{L} k}]_{fg}\bigg\| F^{\mathscr{L}-1}\|\ccalX\| \nonumber\\
    &\quad+\sum_{g=1}^{F_{\mathscr{L}-1}}\bigg\| \nabla_{\omega}   \bigg([\ccalP_n \ccalX_{1}]_g- [X_{n 1}]_g\bigg)\bigg\|.
	\end{align}
    
	Note that the two first terms on the right hand side can be upper bounded by Prop. \ref{prop:transf}. For the third term, the derivative of a convolutional filter at coefficient $k^{\dagger}=i$ is itself a convolutional filter with coefficients $[w_i]_{fg}$. The values of $[w_i]_{fg}$ are $1$ if $j=i$ and $0$ otherwise. Additionally, this new filter also verifies Assumption \ref{ass:filters}, as $\ccalX$ is bandlimited. Denote this filter $\Phi_{w}$. Considering that $\ell^\dagger = 1$, and using 
    \cite{wang2024gen_gnn_asil}[Prop. 2],
    \cite{von2008consistency}[Thm. 19]
    , together with the fact that $\ccalX$ is bandlimited and the triangle inequality, with probability $1-\delta$ we have
	\begin{align}\label{eqn:filter_at_1_in_lemma_2}
	&\bigg\|\Phi_w(X_{n},L_n)- \ccalP_n\Phi_w(\ccalX,\ccalL) \bigg\|	\nonumber\\
    &\leq \| \lambda_c - \lambda_{c n}\|\|\ccalX \|+\|X_{n}-\ccalP_n\ccalX \| \\
    &\leq \sqrt{F} C_{\ccalM,1} \lambda_c \varepsilon + \sqrt{F} C_3\sqrt{\frac{\log{1/\delta}}{n}}
	\end{align}
	where we have assumed each feature in $\ccalX$ has unit norm at most. Now, substituting the third term in \eqref{eqn:solved_recursion_in_lemma_2} for \eqref{eqn:filter_at_1_in_lemma_2}, and using Prop. \ref{prop:transf} for the first two terms, with probability $1-\delta$, we have
	\begin{align}\label{lemma2:last}
	&\|\nabla_{\omega}[\ccalP_n \ccalX_{\mathscr{L}}]_f-\nabla_{\omega}[X_{n\mathscr{L}}]_f\| \nonumber \\
    &\leq 2 \mathscr{L}^3F^{3\mathscr{L}-3} \bigg( C_1 \varepsilon + C_2\sqrt{\frac{\log{1/\delta}}{n}} \bigg) \nonumber\\
    &\quad+ F\sqrt{F} C_{\ccalM,1} \lambda_c \varepsilon + F\sqrt{F} C_3\sqrt{\frac{\log{1/\delta}}{n}}
    \end{align}
	To achieve the final result, note that the set $\ccalW$ has $(\mathscr{L}-1)KF^2 + KF$ elements, and each element is upper bounded by \eqref{lemma2:last}.
	\end{proof}

	\begin{lemma}\label{lemma:norm_bound_dif_loss_MNN}
	Let $\Phi_\ccalW(\ccalX,\ccalL)$ be an MNN with $F_\ell=F$ for $0 \leq \ell \leq \mathscr{L}-1$ and $F_\mathscr{L}=1$, and. 
    Let $\Phi_\ccalW(X_n,L_n)$ be a GNN with same weights $\ccalW$ on a geometric graph $G_n$ sampled uniformly from $\ccalM$ as in \eqref{eqn:geom_graph}. Under Assumptions \ref{ass:filters}--\ref{ass:loss}, with probability $1-\delta$ it holds that
	\begin{align}
	&\|\nabla_{\ccalW}l(\ccalY,\Phi_\ccalW(\ccalX,\ccalL))-\nabla_{\ccalW}l(Y_n,\Phi(X_n,L_n))\| \nonumber\\
    &\leq  2 \nu^{-1} \sqrt{(\mathscr{L}-1)KF^2 + KF} \mathscr{L}^3F^{3\mathscr{L}-3} \bigg( C_1'' \varepsilon \nonumber\\
    &\quad+ C_2''\sqrt{\frac{\log{1/\delta}}{n}} \bigg) \\
    &\leq 2 \nu^{-1}\sqrt{2(\mathscr{L}-1)K} \mathscr{L}^3F^{3\mathscr{L}-2} \bigg( C_1'' \varepsilon + C_2''\sqrt{\frac{\log{1/\delta}}{n}} \bigg)\text{.}
    \end{align}
\end{lemma}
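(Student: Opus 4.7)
The plan is to reduce the statement to Lemma \ref{lemma:norm_bound_dif_grad_MNN} using the chain rule together with the Lipschitz continuity of the semi-supervised loss from Assumption \ref{ass:loss}. The starting observation is that, because $l$ depends on $\ccalW$ only through $\Phi_\ccalW$, the chain rule gives
\begin{equation*}
\nabla_{\ccalW} l(Y,\Phi_\ccalW) = J_{l}(Y,\Phi_\ccalW)^{T}\,\nabla_{\ccalW}\Phi_\ccalW,
\end{equation*}
where $J_l$ is the Jacobian of $l$ with respect to its second argument. Under Assumption \ref{ass:loss}, $l$ is $\nu^{-1}$-Lipschitz in that argument (with $\nu=|\ccalT|/n$), so $\|J_l\|\leq \nu^{-1}$ uniformly. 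This is what accounts for the $\nu^{-1}$ prefactor in the statement; the remaining job is to propagate the bound on $\|\ccalP_n\nabla_\ccalW \Phi_\ccalW(\ccalX,\ccalL)-\nabla_\ccalW \Phi_\ccalW(X_n,L_n)\|$ from Lemma \ref{lemma:norm_bound_dif_grad_MNN} through this factorization.

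More concretely, I would add and subtract the mixed quantity $J_l\bigl(Y_n,\Phi_\ccalW(X_n,L_n)\bigr)^T\ccalP_n \nabla_\ccalW \Phi_\ccalW(\ccalX,\ccalL)$ and apply the triangle inequality, which splits the target into two terms. The first term, $\bigl\|J_l(Y_n,\Phi_\ccalW(X_n,L_n))^T[\ccalP_n\nabla_\ccalW \Phi_\ccalW(\ccalX,\ccalL)-\nabla_\ccalW \Phi_\ccalW(X_n,L_n)]\bigr\|$, is immediately bounded by $\nu^{-1}$ times the right-hand side of Lemma \ref{lemma:norm_bound_dif_grad_MNN}, yielding exactly the stated dependence on $\varepsilon$ and $\sqrt{\log(1/\delta)/n}$. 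The second term, $\bigl\|[J_l(\ccalY,\Phi_\ccalW(\ccalX,\ccalL))-\ccalP_n^{*}J_l(Y_n,\Phi_\ccalW(X_n,L_n))]^T\nabla_\ccalW \Phi_\ccalW(\ccalX,\ccalL)\bigr\|$, measures how much the loss derivative changes when we replace the manifold evaluation by its sampled counterpart; this is controlled by combining the local Lipschitz property of $\tilde{l}'$ (inherited from Assumption \ref{ass:loss} by pairing with Assumption \ref{ass:nls}), the convergence of $\Phi_\ccalW(X_n,L_n)$ to $\ccalP_n\Phi_\ccalW(\ccalX,\ccalL)$ from Proposition \ref{prop:transf}, and the concentration of Riemann-sum averages over the training set via Theorem 19 of \cite{von2008consistency} (as used in the proof of Lemma \ref{lem:ga_bound_1}). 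The $\|\nabla_\ccalW \Phi_\ccalW(\ccalX,\ccalL)\|$ factor multiplying this term is uniformly bounded using the norm estimate on MNN gradients already invoked in the proof of Lemma \ref{lemma:norm_bound_dif_grad_MNN}, ensuring that this second contribution is absorbed into the same $\ccalO(\varepsilon+\sqrt{\log(1/\delta)/n})$ order.

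The final step is bookkeeping: collect the $\nu^{-1}$ factor, the dimensional prefactor $\sqrt{(\mathscr{L}-1)KF^{2}+KF}\,\mathscr{L}^{3}F^{3\mathscr{L}-3}$ coming from Lemma \ref{lemma:norm_bound_dif_grad_MNN}, and redefine the constants $C_1',C_2'$ to $C_1'',C_2''$ to absorb any additional universal constants that appear from the secondary term and from the bound $\|\nabla_\ccalW \Phi_\ccalW(\ccalX,\ccalL)\|\lesssim \mathscr{L} F^{\mathscr{L}-1}$. The second (coarser) inequality in the statement then follows from the same algebraic simplification $\sqrt{(\mathscr{L}-1)KF^2+KF}\leq \sqrt{2(\mathscr{L}-1)K}\,F$ used in Lemma \ref{lemma:norm_bound_dif_grad_MNN}. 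The main obstacle, as anticipated, is the second term: it is the only step that is not purely linear algebra, and it requires carefully matching the continuous semi-supervised loss (an integral over a training subdomain of $\ccalM$) to its discrete counterpart with a uniform constant, while at the same time controlling the pointwise discrepancy between $\Phi_\ccalW(\ccalX,\ccalL)$ and $\Phi_\ccalW(X_n,L_n)$ that feeds into the argument of the loss derivative.
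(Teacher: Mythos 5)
Your proposal matches the paper's proof in all essentials: the same chain-rule factorization, the same add-and-subtract of a mixed term followed by the triangle/Cauchy--Schwarz split, the same use of Lemma \ref{lemma:norm_bound_dif_grad_MNN} with the $\nu^{-1}$ Lipschitz constant from Assumption \ref{ass:loss} for one piece, and Proposition \ref{prop:transf} together with Theorem 19 of \cite{von2008consistency} for the loss-Jacobian discrepancy, before summing over the $(\mathscr{L}-1)KF^2+KF$ parameters. No substantive differences to report.
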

\begin{proof}
	In order to analyze the norm of the gradient with respect to the tensor $\ccalH$, we start by taking the derivative with respect to a single element of the tensor, $\omega$, as in the proof of the previous lemma. Also as before, we drop subscript $\ccalW$ in $\Phi$. Using the chain rule to compute the gradient of the loss function $l$, we get
	\begin{align}
	&\|\nabla_{\omega}(l(\ccalP_n\ccalY,\ccalP_n\Phi(\ccalX,\ccalL))-l(Y_n,\Phi(X_n,L_n)))\| \nonumber\\
    &=\|\nabla l(\ccalP_n\ccalY,\ccalP_n\Phi(\ccalX,\ccalL))\nabla_{\omega}\ccalP_n\Phi(\ccalX,\ccalL) \nonumber\\
    &\hspace{1.5em}-\nabla l(Y_n,\Phi(X_n,L_n))\nabla_{\omega}\Phi(X_n,L_n)\|
	\end{align}
	and by the Cauchy-Schwarz and the triangle inequalities, it holds
	\begin{align}
	&\|\nabla_{\omega}(l(\ccalP_n\ccalY,\ccalP_n\Phi(\ccalX,\ccalL))-l(Y_n,\Phi(X_n,L_n)))\| \nonumber\\
	&\leq\|\nabla l(\ccalP_n\ccalY,\ccalP_n\Phi(X,\ccalL)) \nonumber\\
    &\quad-\nabla l(Y_n,\Phi(X_n, L_n))\|
    \|\nabla_{\omega}\ccalP_n\Phi(\ccalX,\ccalL)\| \nonumber\\
    &\quad+\|\nabla l(Y_n,\Phi(X_n, L_n))\| 
    \|\nabla_{\omega}\ccalP_n\Phi(\ccalX,\ccalL)-\nabla_{\omega}\Phi(X_n,L_n)\|
	\end{align}
    
	By the triangle inequality and Assumption \ref{ass:loss}, it follows
	\begin{align}
	&\|\nabla_{\omega}(l(\ccalP_n\ccalY,\ccalP_n\Phi(\ccalX,\ccalL))-\l(Y_n,\Phi(X_n,L_n)))\| \nonumber\\
    &\leq\|\nabla l(\ccalP_n \ccalY,\ccalP_n \Phi(\ccalX,\ccalL))-\nabla l(\ccalP_n \ccalY,\Phi(X_n,L_n))\|
    \nonumber\\
    &\quad\times 
    \|\nabla_{\omega}\ccalP_n \Phi(\ccalX,\ccalL)\|\|\nabla l(Y_n,\Phi(X_n,L_n)) \nonumber\\
    &\qquad\quad-\nabla\ell(\ccalP_n Y,\Phi(X_n, L_n))\| \nonumber\\
    &\quad \times \|\nabla_{\omega}\ccalP_n \Phi(\ccalX,\ccalL)\|+\|\nabla_{\omega}(\ccalP_n\Phi(\ccalX,\ccalL)-\Phi(X_n, L_n))\|  \\
	&\leq \nu^{-1}\bigg(\|Y_n-\ccalP_n\ccalY \| \nonumber\\
    &\qquad\quad+\|\Phi(X_n,L_n)-\ccalP_n\Phi(\ccalX,\ccalL) \|\bigg)
    \|\nabla_{\omega}\ccalP_n\Phi(\ccalX,\ccalL)\| \nonumber\\
    &\quad +\|\nabla_{\omega}(\ccalP_n\Phi(\ccalX,\ccalL)-\Phi(X_n,L_n))\|.
	\end{align}
	
	Next, we can use \cite{cervino2021increase}[Lemma 2, adapted to MNNs], Prop. \ref{prop:transf}, Lemma \ref{lemma:norm_bound_dif_grad_MNN}, and \cite{von2008consistency}[Thm. 19]
    to obtain 	
	\begin{align}\label{eq:last_lemma3}
	&\|\nabla_{\omega}(l(\ccalP_n\ccalY,\ccalP_n\Phi(\ccalX,\ccalL))-l(Y_n,\Phi(X_n,L_n)))\| \nonumber\\
	&\leq \nu^{-1}\bigg( C_5 \sqrt{\frac{\log{1/\delta}}{n}} \nonumber\\
    &\qquad\quad+ \mathscr{L} F^{\mathscr{L}-2} \bigg( C_1 \varepsilon + C_2\sqrt{\frac{\log{1/\delta}}{n}} \bigg) \bigg) F^{\mathscr{L}-1}\sqrt{F}  \nonumber\\
    &\quad+ 2 \nu^{-1}\mathscr{L}^3F^{3\mathscr{L}-3} \bigg( C_1' \varepsilon + C_2'\sqrt{\frac{\log{1/\delta}}{n}} \bigg) 
    \end{align}
	where we also assume $\|\ccalX\|\leq \sqrt{F}$.

    Finally, when $\tilde{l}$ is the $2$-norm we can use \cite{von2008consistency}[Thm. 19] to show: 
    \begin{align}
        &\|\nabla_{\omega}(l(\ccalY,\Phi(\ccalX,\ccalL))-l(Y_n, \Phi(X_n,L_n)))\| \nonumber\\
        &\leq \|\nabla_{\omega}(l(\ccalY,\Phi(\ccalX,\ccalL))-l(\ccalP_n\ccalY,\ccalP_n\Phi(\ccalX,\ccalL)))\| \nonumber\\
        &\quad+ \|\nabla_{\omega}(l(\ccalP_n\ccalY,\ccalP_n\Phi(\ccalX,\ccalL))-l(Y_n,\Phi(X_n,L_n)))\| \\
        &\leq \nu^{-1}\bigg( \tilde{C}_5 \sqrt{\frac{\log{1/\delta}}{n}} \nonumber\\ 
        &\qquad\quad+ \mathscr{L} F^{\mathscr{L}-2} \bigg( C_1 \varepsilon + \tilde{C}_2\sqrt{\frac{\log{1/\delta}}{n}} \bigg) \bigg) F^{\mathscr{L}-1}\sqrt{F}  \nonumber\\
        &\quad+ 2 \nu^{-1}\mathscr{L}^3F^{3\mathscr{L}-3} \bigg( C_1' \varepsilon + \tilde{C}_2'\sqrt{\frac{\log{1/\delta}}{n}} \bigg) \text{.}
    \end{align}
    
	Noting that tensor $\ccalW$ has $(\mathscr{L}-1)KF^2 + KF$ elements, and that each individual term can be bounded by \eqref{eq:last_lemma3}, we arrive at the desired result.
\end{proof}	


\begin{proposition} \label{prop:expected_grad}
	Consider the ERM problem in \eqref{eqn:erm_ss_learning} and let $\Phi_\ccalW(\ccalX,\ccalL)$ be an $\mathscr{L}$-layer MNN with $F_{\ell}=F$ for $0 \leq \ell \leq L-1$ and $F_{\mathscr{L}}=1$. 
    Let $\Phi(X_n,L_n)$ be a GNN with same weights $\ccalW$ on a geometric graph $G_n$ sampled uniformly from $\ccalM$ as in \eqref{eqn:geom_graph}. Under Assumptions \ref{ass:filters}--\ref{ass:loss}, it holds
	\begin{align} \label{eqn:expected_grad}
	&\mbE[\|\nabla_{\ccalW}l(\ccalY,\Phi_\ccalW(\ccalX,\ccalL))-\nabla_{\ccalW}l(Y_n,\Phi_\ccalW(X_n,L_n))\|] \nonumber\\ 
    &= \ccalO\bigg(\gamma\bigg(\varepsilon+\sqrt{\frac{\log{n}}{n}}\bigg)\bigg)
	\end{align}
	where $\gamma$ is a constant that depends on the number of layers $L$, features $F$, and filter taps $K$ of the GNN.
\end{proposition}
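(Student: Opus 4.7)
The plan is to upgrade the high-probability bound from Lemma~\ref{lemma:norm_bound_dif_loss_MNN} into the stated expectation bound via a standard good-event/bad-event decomposition. Write
\[ Z_n := \|\nabla_{\ccalW} l(\ccalY, \Phi_\ccalW(\ccalX,\ccalL)) - \nabla_{\ccalW} l(Y_n, \Phi_\ccalW(X_n,L_n))\|, \]
and collect the $\mathscr{L},F,K,\nu^{-1}$ constants produced by Lemma~\ref{lemma:norm_bound_dif_loss_MNN} into a single $\gamma=\gamma(\mathscr{L},F,K,\nu)$. That lemma then states that, for every $\delta \in (0,1)$,
\[ \Pr\!\bigl( Z_n \leq \gamma(\varepsilon + \sqrt{\log(1/\delta)/n}\,)\bigr) \geq 1-\delta. \]

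The first step is to establish a deterministic almost-sure upper bound $Z_n \leq M$, with $M=M(\mathscr{L},F,K,\nu)$ independent of $n$. This follows by chaining the standing assumptions: filters are non-expansive (Assumption~\ref{ass:filters}); the nonlinearity $\rho$ is normalized Lipschitz with $\rho(0)=0$ (Assumption~\ref{ass:nls}); each feature of $\ccalX$ has norm bounded by $\sqrt{F}$; and $l$ has Lipschitz constant $\nu^{-1}$ (Assumption~\ref{ass:loss}). Propagating these bounds layer by layer through the chain-rule expansion of $\nabla_\ccalW \Phi$---using the same recursion that leads to \eqref{eqn:solved_recursion_in_lemma_2}, but applied to each of the MNN and GNN gradients individually rather than to their difference---yields a finite $M$ that is polynomial in $\mathscr{L}, F, K$ and in $\nu^{-1}$.

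Given $M$, the expectation decomposes over the event $E_\delta = \{Z_n \leq \gamma(\varepsilon + \sqrt{\log(1/\delta)/n}\,)\}$ and its complement as
\[ \mbE[Z_n] \leq \gamma\Big(\varepsilon + \sqrt{\tfrac{\log(1/\delta)}{n}}\,\Big)\Pr(E_\delta) + M\,\Pr(E_\delta^c) \leq \gamma\Big(\varepsilon + \sqrt{\tfrac{\log(1/\delta)}{n}}\,\Big) + M\delta. \]
Setting $\delta = 1/n$ and absorbing the resulting $M/n = \ccalO(1/n)$ term into the dominant $\gamma\sqrt{\log n / n}$ contribution yields the claim $\mbE[Z_n] = \ccalO\!\bigl(\gamma(\varepsilon + \sqrt{\log n / n}\,)\bigr)$.

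The main---and only non-routine---obstacle is making the uniform bound $M$ rigorous: while each assumption used is mild, producing an explicit constant that is polynomial in $\mathscr{L}, F, K$ requires the same layerwise bookkeeping as in the proof of Lemma~\ref{lemma:norm_bound_dif_grad_MNN}, applied now to the magnitudes of individual gradients rather than to their differences. Once $M$ is secured, the conversion from the high-probability bound to the expectation bound is a one-line calculation.
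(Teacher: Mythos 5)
Your proposal is correct and follows essentially the same route as the paper: a good-event/bad-event split using the high-probability bound of Lemma~\ref{lemma:norm_bound_dif_loss_MNN} on the good event, a deterministic bound on the gradient norms (obtained by bounding each of $\|\nabla_\ccalW l(\ccalY,\Phi(\ccalX,\ccalL))\|$ and $\|\nabla_\ccalW l(Y_n,\Phi(X_n,L_n))\|$ individually via the chain rule and the non-expansiveness assumptions, exactly as you propose) on the complement, and then a choice of $\delta$ as an inverse power of $n$. The only cosmetic difference is that the paper takes $\delta = 1/\sqrt{n}$ while you take $\delta = 1/n$; both choices make the bad-event contribution $\ccalO(\sqrt{\log n / n})$ or smaller, so the conclusion is unaffected.
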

\begin{proof}
To start, consider the event $A_n$ such that
	\begin{align}
    &\|\nabla_{\ccalW}l(\ccalP_n\ccalY,\ccalP_n\Phi(\ccalX,\ccalL))-\nabla_{\ccalW}l(Y_n,\Phi(X_n,L_n))\| \nonumber\\
    &\leq 2\sqrt{2(\mathscr{L}-1)K} \mathscr{L}^3F^{3\mathscr{L}-2} \bigg( C_1'' \varepsilon + C_2''\sqrt{\frac{\log{1/\delta}}{n}} \bigg)
	\end{align}
	where we have dropped the subscript $\ccalW$ where it is clear from context. Taking the disjoint events $A_n$ and $A_n^c$, and denoting the indicator function $\bbone(\cdot)$, we split the expectation as
	\begin{align}
	&\mbE[\|\nabla_{\ccalW}l(\ccalY,\Phi(\ccalX,\ccalL))-\nabla_{\ccalW}l(Y_n,\Phi(X_n, L_n))\|] \nonumber\\
    &=\mbE[\|\nabla_{\ccalW}(l( \ccalY, \Phi(\ccalX,\ccalL))-\l(Y_n,\Phi(X_n,L_n)))\|\bbone(A_n)] \nonumber\\
	&\quad+\mbE[\|\nabla_{\ccalW}l( \ccalY, \Phi(\ccalX,\ccalL)) -\nabla_{\ccalW}l(Y_n,\Phi(X_n,L_n))\|\bbone(A_n^c)]\label{eqn:Theo1_expectation_split}
	\end{align}
    We can then bound the term corresponding to $A_n^c$ using the chain rule, the Cauchy-Schwarz inequality, Assumption \ref{ass:loss}, and \cite{cervino2021increase}[Lemma 2, adapted to MNNs] as follows 
	\begin{align}
	&\|\nabla_{\ccalW}l( \ccalY, \Phi(\ccalX,\ccalL))-\nabla_{\ccalW}l(Y_n,\Phi(X_n,L_n))\|\nonumber\\
	&\leq \|\nabla_{\ccalW}l( \ccalY, \Phi(\ccalX\ccalL))\|+\|\nabla_{\ccalW}l(Y_n,,L_n))\| \\
	&\leq \|\nabla l( \ccalY, \Phi(\ccalX\ccalL))\| \|\nabla_{\ccalW} \Phi(\ccalX,\ccalL)\| \nonumber\\
    &\hspace{1em}+\|\nabla l(Y_n,\Phi(X_n, L_n))\|\|\nabla_{\ccalW}\Phi(X_n, L_n)\|\\
	&\leq \|\nabla_{\ccalW} \Phi(\ccalX,\ccalL)\|+\|\nabla_{\ccalW}\Phi(X_n,L_n)\|\\
	&\leq 2 F^{\mathscr{L}} \sqrt{(\mathscr{L}-1)KF+K}. \label{eqn:Theo1_bound_A_c}
	\end{align}
	Going back to \eqref{eqn:Theo1_expectation_split}, we can substitute the bound obtained in \eqref{eqn:Theo1_bound_A_c}, take $P(A_n)=1-\delta$, and use Lemma \ref{lemma:norm_bound_dif_loss_MNN} to get
	\begin{align}
	&\mbE[\|\nabla_{\ccalW}l( \ccalY, \Phi(\ccalX,\ccalL))-\nabla_{\ccalW}l(Y_n,\Phi(X_n, L_n))\|] \nonumber\\
    &\leq \delta 2 F^{\mathscr{L}} \sqrt{(\mathscr{L}-1)KF+K} \nonumber\\
    &\quad+(1-\delta) 2\nu^{-1} \sqrt{2(\mathscr{L}-1)K} \mathscr{L}^3F^{3\mathscr{L}-2} \bigg( C_1'' \varepsilon + C_2''\sqrt{\frac{\log{1/\delta}}{n}} \bigg) \text{.}
	\end{align}
	Setting $\delta=\frac{1}{\sqrt{n}}$ completes the proof. 
\end{proof}


\begin{lemma}\label{lemma:martingale}
		Consider the ERM problem in \eqref{eqn:erm_ss_learning} and let $\Phi_\ccalW(\ccalX,\ccalL)$ be an $\mathscr{L}$-layer MNN with $F_{\ell}=F$ for $0 \leq \ell \leq L-1$ and $F_{\mathscr{L}}=1$. 
        Fix $\epsilon>0$ and step size $\eta<{\theta^{-1}}$, with {$\theta=\alpha+\beta F\sqrt{2K(\mathscr{L}-1)} $}. Let $\Phi(X_n,L_n)$ be a GNN with same weights $\ccalW$ on a geometric graph $G_n$ sampled uniformly from $\ccalM$ as in \eqref{eqn:geom_graph}. Consider the iterates generated by \red{\eqref{eqn:GNN_Learning_Step}}. Under Assumptions \ref{ass:filters}--\ref{ass:mnn-lipschitz-params}, if at step $k$ of epoch $e$ the number of nodes $n(e)$ verifies
		\begin{align}
	      &\mbE[\|\nabla_{\ccalW}l( \ccalY, \Phi_{\ccalW_k}(\ccalX,\ccalL))-\nabla_{\ccalW}l(Y_{n(e)},\Phi_{\ccalW_k}(X_{n(e)},L_{n(e)}))\|] \nonumber\\
          &\leq \|\nabla_{\ccalW} l( \ccalY, \Phi_{\ccalW_k}(\ccalX,\ccalL))\|
		\end{align}
        then the iterate generated by graph learning step \red{\eqref{eqn:GNN_Learning_Step}} satisfies
        \begin{align}
        \mbE[l( \ccalY, \Phi_{\ccalW_{k+1}}(\ccalX,\ccalL))] \leq l( \ccalY,\Phi_{\ccalW_k}(\ccalX,\ccalL)).
        \end{align}
    \end{lemma}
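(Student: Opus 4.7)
The plan is to run a smoothness-based descent argument tailored to the random graph update. First, I would combine Assumption~\ref{ass:mnn-lipschitz-params} with the Lipschitz property of the semi-supervised loss (Assumption~\ref{ass:loss}, constant $\nu^{-1}$) to show that $\ccalW \mapsto l(\ccalY,\Phi_\ccalW(\ccalX,\ccalL))$ has a Lipschitz gradient with constant exactly $\theta = \alpha + \beta F\sqrt{2K(\mathscr{L}-1)}$: one additive piece comes from the $\alpha$-Lipschitz continuity of $\Phi$ (controlling the composite second derivative via the chain rule after bounding $\tilde l''$), the other from the $\beta$-Lipschitz continuity of $\nabla_\ccalW \Phi$ multiplied by the gradient-norm bound $F\sqrt{2K(\mathscr{L}-1)}$ already implicit in the proof of Lemma~\ref{lemma:norm_bound_dif_grad_MNN}. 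This matches the constant appearing in the stepsize condition $\eta<\theta^{-1}$.

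With smoothness in hand, writing $g_\ccalM := \nabla_\ccalW l(\ccalY,\Phi_{\ccalW_k}(\ccalX,\ccalL))$ and $g_n := \nabla_\ccalW l(Y_{n(e)},\Phi_{\ccalW_k}(X_{n(e)},L_{n(e)}))$ and plugging the update $\ccalW_{k+1}-\ccalW_k = -\eta g_n$ into the $\theta$-smooth quadratic upper bound gives
\[
l(\ccalY,\Phi_{\ccalW_{k+1}}(\ccalX,\ccalL)) - l(\ccalY,\Phi_{\ccalW_k}(\ccalX,\ccalL)) \leq -\eta\langle g_\ccalM, g_n\rangle + \tfrac{\theta\eta^2}{2}\|g_n\|^2.
\]
Applying the polarization identity $\langle g_\ccalM, g_n\rangle = \tfrac{1}{2}\bigl(\|g_\ccalM\|^2 + \|g_n\|^2 - \|g_n-g_\ccalM\|^2\bigr)$ and using $\eta\leq\theta^{-1}$ to cancel the residual $\|g_n\|^2$ contribution collapses the right-hand side to $\tfrac{\eta}{2}\bigl(-\|g_\ccalM\|^2 + \|g_n-g_\ccalM\|^2\bigr)$. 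Taking expectation over the draw of $G_{n(e)}$ then reduces the lemma to establishing the second-moment bound $\mbE[\|g_n - g_\ccalM\|^2] \leq \|g_\ccalM\|^2$.

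The main obstacle is that the stated hypothesis controls only the first moment $\mbE[\|g_n-g_\ccalM\|]$, whereas the smoothness-based descent inequality naturally asks for a second-moment bound. The plan to close this gap is to combine the first-moment condition with the deterministic upper bound $\|g_n - g_\ccalM\| \leq B$, with $B$ polynomial in $F,\mathscr{L},K$, that follows (up to a vanishing-probability event) from Lemma~\ref{lemma:norm_bound_dif_loss_MNN} and the gradient-norm estimates used in the proof of Proposition~\ref{prop:expected_grad}; conditional on this event, $\|g_n-g_\ccalM\|^2 \leq B\,\|g_n-g_\ccalM\|$ pointwise, so $\mbE[\|g_n-g_\ccalM\|^2] \leq B\,\mbE[\|g_n-g_\ccalM\|]$, and choosing $n(e)$ large enough that $\mbE[\|g_n-g_\ccalM\|] \leq \|g_\ccalM\|^2/B$ (which is possible because Proposition~\ref{prop:expected_grad} shows the first moment decays polynomially in $n$) closes the bound. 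Equivalently, one can strengthen the running hypothesis to its $L^2$ form: the concentration arguments of Proposition~\ref{prop:expected_grad} deliver the same $\ccalO(\gamma(\varepsilon+\sqrt{\log n/n}))$ rate for the second moment, so neither the statement of Theorem~\ref{thm:MNN_learning} nor the $k^*=\ccalO(1/\epsilon^2)$ iteration count need to be weakened. Either route closes the descent inequality and yields $\mbE[l(\ccalY,\Phi_{\ccalW_{k+1}}(\ccalX,\ccalL))] \leq l(\ccalY,\Phi_{\ccalW_k}(\ccalX,\ccalL))$ as claimed.
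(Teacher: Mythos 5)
Your argument follows essentially the same route as the paper's proof: a $\theta$-smoothness descent inequality for $\ccalW \mapsto l(\ccalY,\Phi_\ccalW(\ccalX,\ccalL))$ (the paper derives it via the interpolation $g(\epsilon)$ and an integral, which is just the standard proof of the descent lemma you invoke directly), followed by the polarization identity and the stepsize condition $\eta<\theta^{-1}$ to kill the residual $\|g_n\|^2$ term, leaving $\tfrac{\eta}{2}\bigl(-\|g_\ccalM\|^2+\|g_n-g_\ccalM\|^2\bigr)$.

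The one substantive difference is at the final step, and it is to your credit. The paper closes the argument by asserting that since the hypothesis bounds $\mbE[\|g_n-g_\ccalM\|]$ by $\|g_\ccalM\|$ and ``the inequality still holds when the elements are squared,'' the first term is controlled in expectation. This conflates $(\mbE[\|g_n-g_\ccalM\|])^2$ with $\mbE[\|g_n-g_\ccalM\|^2]$; by Jensen the latter dominates the former, so squaring the first-moment hypothesis does not by itself deliver the second-moment bound $\mbE[\|g_n-g_\ccalM\|^2]\leq\|g_\ccalM\|^2$ that the expectation of the descent inequality actually requires. You identify this mismatch explicitly and offer two legitimate repairs: interpolating via an almost-sure bound $B$ on $\|g_n-g_\ccalM\|$ so that $\mbE[\|g_n-g_\ccalM\|^2]\leq B\,\mbE[\|g_n-g_\ccalM\|]$ and tightening the node-count requirement accordingly, or restating the hypothesis in $L^2$ form (which the concentration arguments behind Proposition~\ref{prop:expected_grad} support at the same rate). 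Either fix preserves the conclusion and the $k^*=\ccalO(1/\epsilon^2)$ count in Theorem~\ref{thm:MNN_learning}. So your proposal is not merely correct along the paper's lines; it patches a step the paper handles too loosely.
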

\begin{proof}
	To start, we do as in \cite{bertsekas2000gradient}, i.e., we define a continuous function $g(\epsilon)$ that at $\epsilon=1$ takes the value of the loss function on $\ccalM$ at iteration $k+1$, and at $\epsilon=0$, the value at iteration $k$. Explicitly,
	\begin{align}
	g(\epsilon)=l( \ccalY, \Phi_{\ccalW_{k}-\epsilon\eta_k \nabla_{\ccalW} l(Y_n,\Phi_{\ccalW_k}(X_n,L_n))}(\ccalX, \ccalL)). 
	\end{align}
    
	Function $g(\epsilon)$ is evaluated on the manifold data $\ccalY,\ccalX,\ccalL$, but the steps are controlled by the graph data $Y_n,X_n,L_n$. Applying the chain rule, the derivative of $g(\epsilon)$ with respect to $\epsilon$ can be written as
	\begin{align}
	&\frac{\partial g(\epsilon)}{\partial \epsilon}
    =-\eta_k \nabla_{\ccalW} l(Y_n,\Phi_{\ccalW_k}(X_n,L_n)) \nonumber\\
    &\hspace{4.5em} \times \nabla_{\ccalW}l( \ccalY, \Phi_{\ccalW_{k}-\epsilon\eta_k \nabla_{\ccalW} l(Y_n,\Phi_{\ccalW_k}(X_n,L_n))}(\ccalX,\ccalL)).
	\end{align}
    
    Between iterations $k+1$ and $k$, the difference in the loss function $l$ can be written as the difference between $g(1)$ and $g(0)$,
	\begin{align}
	g(1)-g(0)=l( \ccalY,\Phi_{\ccalW_{k+1}}(\ccalX,\ccalL))-l( \ccalY, \Phi_{\ccalW_k}(\ccalX,\ccalL)).
	\end{align}
    
	Integrating the derivative of $g(\epsilon)$ in $[0,1]$, we get
	\begin{align}
	&l( \ccalY, \Phi_{\ccalW_{k+1}}(\ccalX,\ccalL))-l( \ccalY, \Phi_{\ccalW_k}(\ccalX,\ccalL)) 
    = g(1)-g(0) \nonumber\\
    &=\int_0^1 \frac{\partial g(\epsilon)}{\partial \epsilon} d\epsilon \nonumber\\
	&=-\int_0^1\eta_k \nabla_{\ccalW} l(Y_n,\Phi_{\ccalW_k}(X_n,L_n)) \nonumber\\
    &\qquad \times \nabla_{\ccalW}l( \ccalY, \Phi_{\ccalW_k-\epsilon\eta_k \nabla_{\ccalW} l(Y_n,\Phi_{\ccalW_k}(X_n,L_n))}(\ccalX,\ccalL))d\epsilon.
	\end{align}
    
	Now note that the last term of the integral does not depend on $\epsilon$. Thus, we can proceed by adding and subtracting $\nabla_{\ccalH}l(Y,\Phi(\ccalH_{k},\ccalL,X))$ inside the integral to get
	\begin{align}
	&l( \ccalY, \Phi_{\ccalW_{k+1}}(\ccalX,\ccalL))-l(Y,\Phi(X;\ccalH_{k},\ccalL)) \nonumber\\
    &=-\eta_k \nabla_{\ccalW} l(Y_n,\Phi_{\ccalW_{k}}(X_n,L_n)) \nonumber\\
    &\quad\times \int_0^1 \nabla l( \ccalY, \Phi_{\ccalW_{k}-\epsilon\eta_k \nabla l(Y_n,\Phi_{\ccalW_k}(X_n,L_n))}(\ccalX,\ccalL)) \nonumber\\
    &\qquad+\nabla_{\ccalW}l( \ccalY, \Phi_{\ccalW_k}(\ccalX,\ccalL))-\nabla_{\ccalW}l( \ccalY, \Phi_{\ccalW_k}(\ccalX,\ccalL))d\epsilon \nonumber\\
	&=-\eta_k \nabla_{\ccalW} l(Y_n,\Phi_{\ccalW_k}(X_n,L_n))\nabla_{\ccalW}l( \ccalY, \Phi_{\ccalW_k}(\ccalX,\ccalL)) \nonumber\\
    &\quad-\eta_k \nabla_{\ccalW} l(Y_n,\Phi_{\ccalW_k}(X_n,L_n)) \nonumber\\
    &\qquad\times\int_0^1 \nabla_{\ccalW}l( \ccalY, \Phi_{\ccalW_{k}-\epsilon\eta_k \nabla l(Y_n,\Phi_{\ccalW_k}(X_n,L_n))}(\ccalX,\ccalL)) \nonumber\\
    &\qquad\qquad-\nabla l( \ccalY, \Phi_{\ccalW_k}(\ccalX,\ccalL))d\epsilon. \label{eqn:Lemma1_before_bouding_integral}
	\end{align}
	
	Next, we can apply the Cauchy-Schwarz inequality to the last term of \eqref{eqn:Lemma1_before_bouding_integral} and take the norm of the integral (which is smaller than the integral of the norm), to obtain
	\begin{align}
	&l( \ccalY, \Phi_{\ccalW_{k+1}}(\ccalX,\ccalL))-l( \ccalY, \Phi_{\ccalW_k}(\ccalX,\ccalL)) \nonumber\\
    &\leq-\eta_k \nabla_{\ccalW} l(Y_n,\Phi_{\ccalW_k}(X_n,L_n))\nabla_{\ccalW}l( \ccalY, \Phi_{\ccalW_k}(\ccalX,\ccalL)) \nonumber\\
	&\quad+\eta_k\|\nabla_{\ccalW} l(Y_n,\Phi_{\ccalW_k}(X_n,L_n))\| \nonumber\\
    &\qquad\times\int_0^1\|\nabla_{\ccalW}l( \ccalY, \Phi_{\ccalW_k}(\ccalX,\ccalL)) \nonumber\\
    &\qquad\qquad-\nabla l(\ccalY, \Phi_{\ccalW_{k}-\epsilon\eta_k \nabla l(Y_n,\Phi_{\ccalW_k}(X_n,L_n))}(\ccalX,\ccalL))\|d\epsilon. \end{align}
	
	By \cite{cervino2021increase}[Lemma 6, adapted to MNNs], we use $\theta$ to write
	\begin{align}
	&l(\ccalY, \Phi_{\ccalW_{k+1}}(\ccalX,\ccalL))-l( \ccalY, \Phi_{\ccalW_k}(\ccalX,\ccalL)) \nonumber\\
    &\leq-\eta_k \nabla_{\ccalW} l(Y_n,\Phi_{\ccalW_k}(X_n,Ln))\nabla_{\ccalW}l( \ccalY, \Phi_{\ccalW_k}(\ccalX,\ccalL)) \nonumber\\
    &\quad+\theta\eta_k \|\nabla_{\ccalW} l(Y_n,\Phi_{\ccalW_k}(X_n,L_n))\| \nonumber\\
    &\qquad \times \int_0^1 \bigg\| \eta_k \nabla_{\ccalW} l(Y_n,\Phi_{\ccalW_n}(X_n,L_n))\bigg\|\epsilon d\epsilon\\
	&\leq-\eta_k \nabla_{\ccalW} l(Y_n,\Phi_{\ccalW_k}(X_n,L_n))\nabla_{\ccalW}l( \ccalY, \Phi_{\ccalW_k}(\ccalX,\ccalL)) \nonumber\\ &\quad+\frac{\eta_k^2 \theta}{2} \|\nabla_{\ccalW} l(Y_n,\Phi_{\ccalW_k}(X_n,L_n))\|^2. 
	\end{align}
	
	Factoring out $\eta_k$, we get
	\begin{align}
	&l(\ccalY, \Phi_{\ccalW_{k+1}}(\ccalX,\ccalL))-l( \ccalY, \Phi_{\ccalW_k}(\ccalX,\ccalL)) \nonumber\\
    &\leq -\frac{\eta_k}{2}  \bigg(-\|\nabla_{\ccalW} l(Y_n,\Phi_{\ccalW_k}(X_n,L_n))\|^2 \nonumber\\ 
    &\quad+2\nabla_{\ccalW}l(Y_n,\Phi_{\ccalW_k}(X_n,L_n))^T\nabla_{\ccalW}l( \ccalY, \Phi_{\ccalW_k}(\ccalX,\ccalL))\bigg) \nonumber\\
	&\quad+\frac{\eta_k^2 \theta-\eta_k}{2} \|\nabla_{\ccalW} l(Y_n,\Phi_{\ccalW_k}(X_n,L_n))\|^2.
	\end{align}
	
	Given that the norm is induced by the vector inner product in Euclidean space, for any two vectors $A,B$, $||A-B||^2-||B||^2=||A||^2-2\langle A,B\rangle$. Hence,
	\begin{align}
	&l(\ccalY, \Phi_{\ccalW_{k+1}}(\ccalX,\ccalL))-l( \ccalY, \Phi_{\ccalW_k}(\ccalX,\ccalL)) \nonumber\\
	&\leq -\frac{\eta_k}{2} \big(\|\nabla_{\ccalW} l( \ccalY, \Phi_{\ccalW_k}(\ccalX,\ccalL))\|^2 \nonumber\\
    &\quad -||\nabla_{\ccalW}l(Y_n,\Phi_{\ccalW_k}(X_n,L_n))-\nabla_{\ccalW}l( \ccalY, \Phi_{\ccalW_k}(\ccalX,\ccalL))||^2\big) \nonumber\\
	&\quad+\frac{\eta_k^2 \theta-\eta_k}{2} \|\nabla_{\ccalW} l(Y_n,\Phi_{\ccalW_k}(X_n,L_n))\|^2.
	\end{align}
	Considering the first term on the right-hand side, we know that the norm of the expected difference between the gradients is bounded by 
    Prop. \ref{prop:expected_grad}.
    Given that norms are positive, the inequality still holds when the elements are squared (if $a>b,a\in \reals_+,b\in\reals_+$, then $a^2>b^2$). Considering the second term on the right hand side, we impose the condition that $\eta_k<\frac{1}{\theta}$, which makes this term negative. Taking the expected value over all the nodes completes the proof.
\end{proof}

\end{document}